
\documentclass{article}

\usepackage{thmtools}
\usepackage{thm-restate}


\newcommand{\cS}{\mathcal{S}}

\newcommand{\bE}{\mathbb{E}}

\newcommand{\bI}{\mathbb{I}}

\newcommand{\bP}{\mathbb{P}}


\usepackage{makecell}


\usepackage{amsmath,amsfonts,bm}









\def\eqref#1{equation~\ref{#1}}









\def\1{\bm{1}}










\DeclareMathAlphabet{\mathsfit}{\encodingdefault}{\sfdefault}{m}{sl}
\SetMathAlphabet{\mathsfit}{bold}{\encodingdefault}{\sfdefault}{bx}{n}













\usepackage{microtype}
\usepackage{graphicx}
\usepackage{subfigure}
\usepackage{booktabs} 

\usepackage{hyperref}



\usepackage[accepted]{icml2023}

\usepackage{amsmath}
\usepackage{amssymb}
\usepackage{mathtools}
\usepackage{amsthm}
\usepackage{multicol}
\usepackage{multirow}
\usepackage{booktabs}
\usepackage{wrapfig}

\theoremstyle{plain}
\newtheorem{theorem}{Theorem}[section]

\newtheorem{lemma}[theorem]{Lemma}

\theoremstyle{definition}
\newtheorem{definition}[theorem]{Definition}

\theoremstyle{remark}

\newtheorem{example}[theorem]{Example}

\usepackage[capitalize,noabbrev]{cleveref}


\usepackage{algorithm}
\usepackage{algorithmic}

\usepackage{arydshln}
\usepackage[textsize=tiny]{todonotes}

\icmltitlerunning{On Uni-Modal Feature Learning in Supervised Multi-Modal Learning}

\begin{document}

\twocolumn[
\icmltitle{On Uni-Modal Feature Learning in Supervised Multi-Modal Learning}



\icmlsetsymbol{equal}{*}


\begin{icmlauthorlist}
\icmlauthor{Chenzhuang Du}{equal,iiis}
\icmlauthor{Jiaye Teng}{equal,iiis}
\icmlauthor{Tingle Li}{ucb}
\icmlauthor{Yichen Liu}{iiis}
\icmlauthor{Tianyuan Yuan}{iiis}
\icmlauthor{Yue Wang}{mit}
\icmlauthor{Yang Yuan}{iiis,ailab,qizhi}
\icmlauthor{Hang Zhao}{iiis,ailab,qizhi}
\end{icmlauthorlist}

\icmlaffiliation{iiis}{IIIS, Tsinghua University}
\icmlaffiliation{qizhi}{Shanghai Qi Zhi Institute}
\icmlaffiliation{mit}{Massachusetts Institute of Technology}
\icmlaffiliation{ucb}{University of California, Berkeley}
\icmlaffiliation{ailab}{Shanghai Artificial Intelligence Laboratory}

\icmlcorrespondingauthor{Chenzhuang Du}{ducz20@mails.tsinghua.edu.cn}
\icmlcorrespondingauthor{Hang Zhao}{hangzhao@mail.tsinghua.edu.cn}

\icmlkeywords{Machine Learning, ICML}

\vskip 0.3in
]



\printAffiliationsAndNotice{\icmlEqualContribution} 

\begin{abstract}
We abstract the features~(\textit{i.e.} learned representations) of multi-modal data into 1)~\emph{uni-modal features}, which can be learned from uni-modal training, and 2) \emph{paired features}, which can \emph{only} be learned from cross-modal interactions.
Multi-modal models are expected to benefit from cross-modal interactions on the basis of ensuring uni-modal feature learning.
However, recent supervised multi-modal late-fusion training approaches still suffer from insufficient learning of uni-modal features on each modality. 
\emph{We prove that this phenomenon does hurt the model's generalization ability}.
To this end, we propose to choose a targeted late-fusion learning method for the given supervised multi-modal task from \textbf{U}ni-\textbf{M}odal \textbf{E}nsemble~(UME) and the proposed \textbf{U}ni-\textbf{M}odal \textbf{T}eacher~(UMT), according to the distribution of uni-modal and paired features. 
We demonstrate that, under a simple guiding strategy, we can achieve comparable results to other complex late-fusion or intermediate-fusion methods on various multi-modal datasets, including VGG-Sound, Kinetics-400, UCF101, and ModelNet40.
\end{abstract}

\section{Introduction}
\begin{figure*}[ht]
    \centering
    \includegraphics[width=0.9\linewidth]{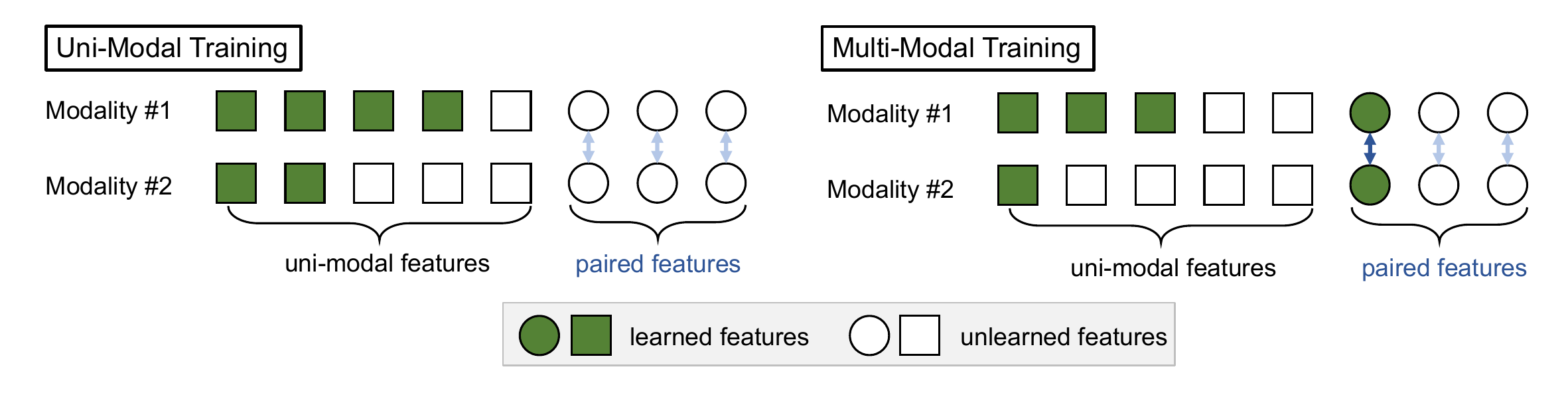}
    \caption{\textbf{Overview of Modality Laziness}. Although multi-modal joint training provides the opportunity for cross-modal interaction to learn paired features, the model easily saturates and ignores the uni-modal features that are hard to learn but also important to generalization. Recent multi-modal methods still suffer from this problem~\cite{peng2022balanced}.
    }
    \label{fig:overview_laziness}
\end{figure*}

Multi-modal signals, \textit{e.g.}, vision, sound, text, are ubiquitous in our daily life, allowing us to perceive the world through multiple sensory systems. Inspired by the crucial role that multi-modal interactions play in human perception and decision \citep{smith2005development}, substantial efforts have been made to build effective and reliable computational multi-modal systems in fields like multimedia computing \citep{wang2020makes, xiao2020audiovisual}, representation learning \citep{radford2021learning} and robotics \citep{chen2020soundspaces}.

\citet{liang2022multiviz} analyzes multi-modal model behavior by studying the uni-modal importance, cross-modal interactions and so on.
In this paper, according to how the features~(\textit{i.e.} learned representations) of multi-modal data can be learned in supervised learning, we abstract them into two categories: (1) \emph{uni-modal features}, which can be learned from uni-modal training, and (2) \emph{paired features}, which can \emph{only} be learned from cross-modal interactions.
Ideally, we hope that multi-modal models can learn paired features through cross-modal interactions on the basis of ensuring that enough uni-modal features are learned. 

However, recent supervised multi-modal late-fusion training methods still suffer from learning insufficient uni-modal features of each modality in tasks where uni-modal priors are meaningful~\footnote{Uni-modal prior here means that we get predictions only according to one modality in multi-modal tasks.}~\citep{peng2022balanced}. Specifically, in linear probing, the encoders from multi-modal learning perform worse than those from uni-modal learning.
We term this phenomenon as \emph{Modality Laziness} and illustrate that in Figure~\ref{fig:overview_laziness}. 
In this paper, we prove that \textbf{it does hurt the generalization ability of the model}, especially when uni-modal features are dominant in given task.

Besides the laziness problem, another shortcoming of recent late-fusion approaches is complex to implement.
For example, G-Blending~\citep{wang2020makes}  needs an extra split of data to estimate the overfitting-to-generalization ratio to re-weight the losses and then re-train the model again and again. OGM-GE~\cite{peng2022balanced}, which dynamically adjusts the gradients of different modalities during training, needs to tune too many hyper-parameters, including the start and end epoch of the gradient modulation, an ``alpha" used to calculate the coefficients for the modulation and whether adaptive Gaussian noise Enhancement (GE) is needed. The more complicated thing is that these hyper-parameters need to be re-tuned on new datasets~\footnote{\url{https://github.com/GeWu-Lab/OGM-GE_CVPR2022}}.

To this end, more simple and effective methods are urgently needed.
We pay attention to the learning of uni-modal features and propose to choose targeted late-fusion training method for the given task from \textbf{U}ni-\textbf{M}odal \textbf{E}nsemble~(UME) and the proposed  \textbf{U}ni-\textbf{M}odal \textbf{T}eacher~(UMT) according to the distribution of uni-modal and paired features:
\begin{itemize}
    \item If both uni-modal and paired features are essential, UMT is effective, which helps multi-modal models better learn uni-modal features via uni-modal distillation and also preserves cross-modal interactions; 
    \item If paired features are not important and both modalities have strong uni-modal features, UME is properer, which directly combines outputs of uni-modal models and almost avoids cross-modal interactions that may lead to Modality Laziness.
\end{itemize}
We also provide an empirical trick to decide which one to use for a given task. 
Under this guidance, we achieve comparable results to other complex late-fusion or intermediate-fusion methods on multiple multi-modal datasets, including VGG-Sound~\citep{chen2020vggsound}, Kinetics-400~\citep{kay2017kinetics}, UCF101~\citep{soomro2012ucf101} and ModelNet40~\citep{wu2022characterizing}.

\section{Related Work}
\textbf{Multi-modal training approaches} aim to train a multi-modal model by using all available modalities~\citep{liang2021multibench}. Common supervised multi-modal tasks include audio-visual classification~\citep{peng2022balanced,xiao2020audiovisual, panda2021adamml}, action recognition~\citep{wang2020makes, panda2021adamml}, visual question answering~\citep{agrawal2018don}, RGB-D segmentation~\citep{park2017rdfnet, hu2019acnet, esanet2020} and so on. There are several different fusion methods, including early/middle fusion~\citep{esanet2020,nagrani2021attention, wu2022characterizing} and late fusion~\citep{wang2020makes, peng2022balanced,fayek2020large}. 
In this paper, we mainly improve the late-fusion methods following~\citet{wang2020makes}, which is convenient and straightforward to evaluate the learning of uni-modal features. 

\textbf{Multi-modal learning theory.}
The research on multi-modal learning theory is still at an early age.
A line of work focuses on understanding multi-view tasks \citep{amini2009learning,xu2013survey,arora2016stochastic,allen2020towards}, and our assumption on the data structure partially stems from \citet{allen2020towards}. 
\citet{huang2021makes} explains why multi-modal learning is potentially better than uni-modal learning and \citet{huang2022modality} explains why failure exists in multi-modal learning. 
Several works~\citep{hessel2020does,liang2022multiviz} also have analyzed the cross-modal interactions. 
Our paper investigates the different types of features in multi-modal data and provides solutions for the weakness of multi-modal learning.

\textbf{Knowledge distillation} was introduced to compress the knowledge from an ensemble into a smaller and faster model but still preserve competitive generalization power~\citep{bucilu2006model,hinton2015distilling,tian2019contrastive,gou2021knowledge, allen2020towards}. 
In this paper, we propose Uni-Modal Teacher, which leverages uni-modal distillation for joint training to help the learning of uni-modal features, without involving cross-modal knowledge distillation~\citep{pham2019found, gupta2016cross, tan2020vokenization,garcia2018modality,luo2018graph}.

\begin{table*}[t]
	\centering
	\renewcommand{\arraystretch}{1.1}
	\caption{Top 1 test accuracy (in \%) of linear evaluation on encoders from various multi-modal late-fusion training methods and uni-modal training on VGG-Sound and UCF101.}
	\vspace{5pt}
	\begin{tabular}{c|c c |c c}
		\toprule
		\multirow{2}*{\textbf{Method}}& \multicolumn{2}{c|}{\textbf{VGG-Sound}} & \multicolumn{2}{c}{\textbf{UCF101}} \\ \cline{2-5} & RGB Encoder & Audio Encoder & RGB Encoder & Opt-Flow Encoder \\
		\midrule
		Linear-Fusion & 15.56 & 43.44  & 75.66 & 48.08\\
		MLP-Fusion & 14.52 & 40.01  & 75.65 & 51.89\\
		Attention-Fusion & 13.31 & 43.97  & 74.84 & 7.72\\
		G-Blending & 17.69 & 43.90  & 74.91 & 44.49\\
		OGM-GE & 15.60 & 41.95 & 73.54 & 65.03\\
		\midrule
		Uni-Modal Training &  \textbf{23.17} & \textbf{45.15} &\textbf{77.08} &  \textbf{74.99} \\
		
		\bottomrule
	\end{tabular}
	\label{tb:modality_laziness}
\end{table*}

\begin{figure*}[t]
\centering 
    \subfigure[RGB encoder evaluation on VGG-Sound.] { \label{fig:a} 
    \includegraphics[width=0.85\columnwidth]{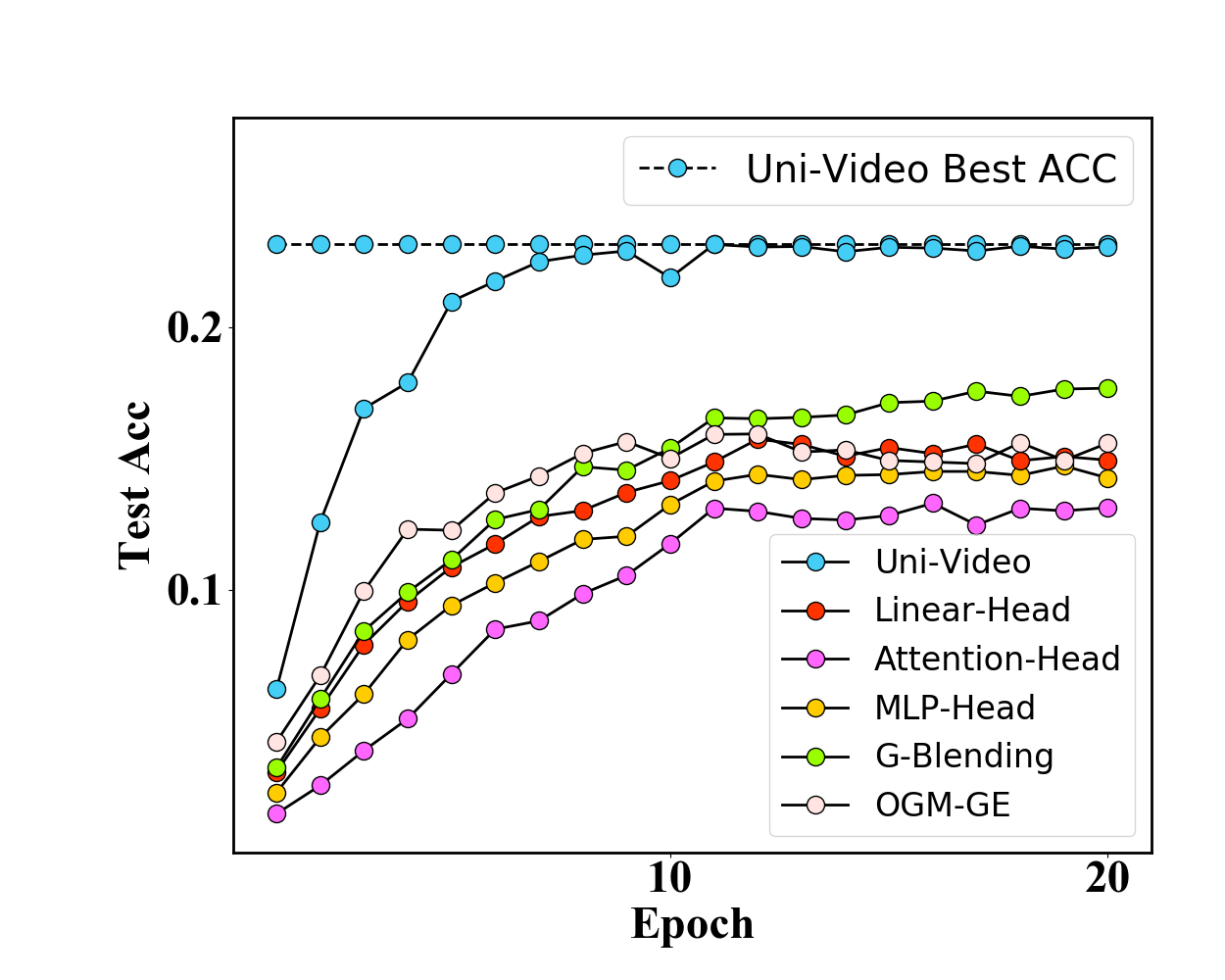} 
    } 
    \subfigure[Optical flow encoder evaluation on UCF101.] { \label{fig:b} 
    \includegraphics[width=0.85\columnwidth]{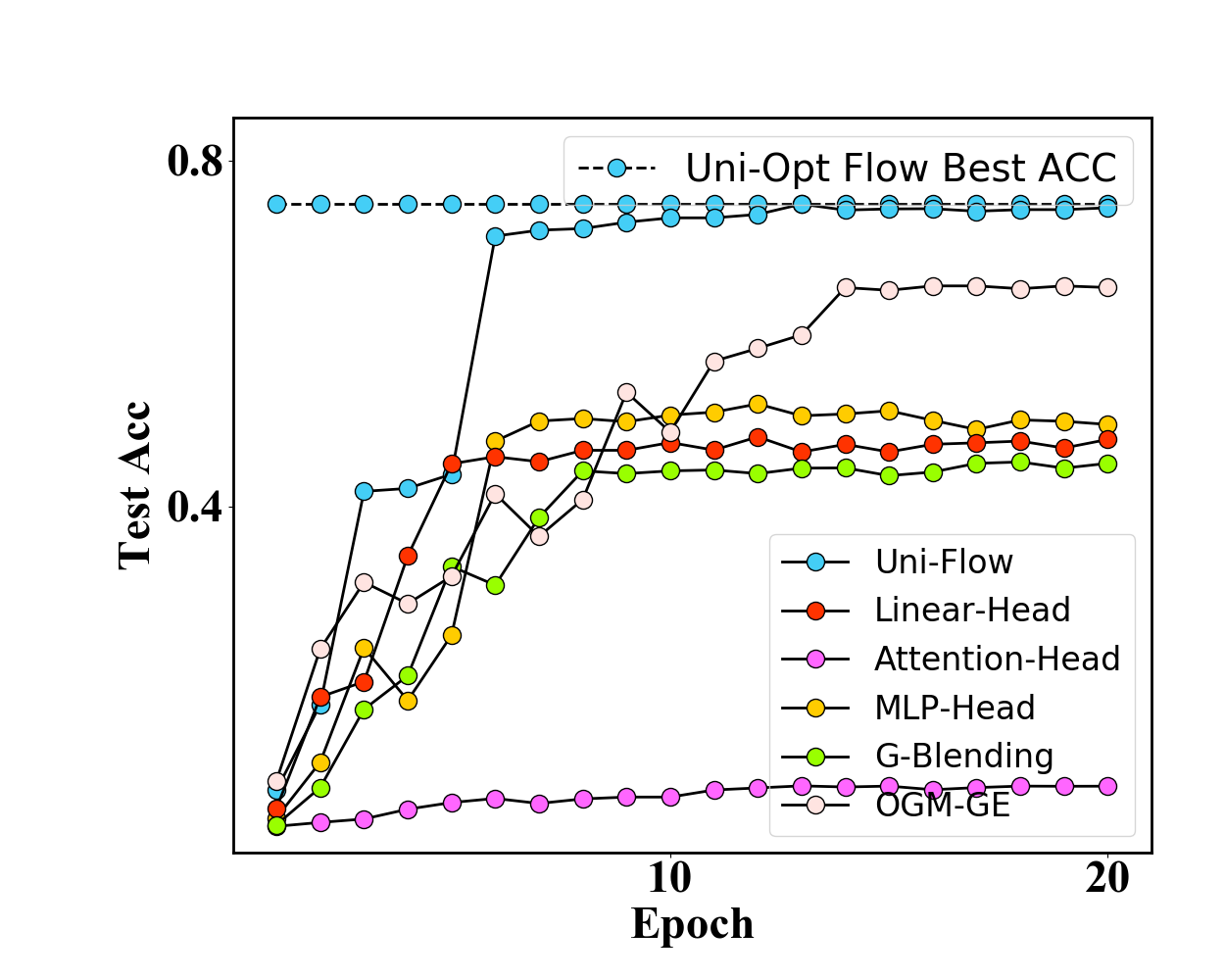} 
    } 
\caption{By building a linear classifier on encoders and checking the top-1 accuracy in the training process, we evaluate the RGB encoders in VGG-Sound and the optical flow encoders in UCF101 from different multi-modal late-fusion methods.} 
\label{fig:laziness_process} 
\end{figure*}

\begin{table*}[ht]
    \centering
    \caption{
    Top-1 test accuracy of multi-modal models with different freedom of cross-modal interactions and uni-modal models on certain classes of VGG-Sound. 
    }
    \vspace{5pt}
    \begin{tabular}{c|cccccccccc|c}
    \toprule
      Class ID   & 164  & 303 &  33& 255 & 91 & 4 & 152 & 127 & 68 & 155  & mean acc  \\
    \midrule   
    Uni-RGB&3  &2 &4&3&4&12 &2 &0  &15  & 5 &   5     \\
    Uni-Audio&30  & 7 &34&10&43& 50 & 18 & 0 &53  & 32 &    27.7       \\
    \midrule
    Avg Pred & 37 & 10& 37 & 7 & 28 & 63 & 21 & 0 & 51& 30& 28.4\\
    Linear Clf & 35 & 15 & 33 & \textbf{27}& \textbf{60} & 65 & 26 & 2 & 53& \textbf{49}& 36.5\\
    Naive Fusion& \textbf{43}   & \textbf{18} & \textbf{48} & 22& 55& \textbf{67} & \textbf{26} & \textbf{4}& \textbf{72} & 40 & \textbf{39.5}      \\
    \bottomrule
    \end{tabular}
    \label{tb:class}
\end{table*}

\section{Analysis, Learning Guidance and Theoretical Guarantee}
In this section, we first illustrate Modality Laziness still exists in recent multi-modal late-fusion learning. 
And then we investigate how multi-modal models benefit from joint training. 
Based on this, we propose to choose the appropriate learning method for the given multi-modal task according to its characteristic.
Finally, we provide a theoretical analysis of Modality Laziness and justification for our solution.

\textbf{Discussion.}
The importance of uni-modal prior varies across different multi-modal tasks.
In tasks like video classification~\citep{chen2020vggsound} and action recognition~\citep{feichtenhofer2016convolutional,wang2020makes}, uni-modal models can achieve good performance alone, suggesting that uni-modal priors in these settings are essential. Visual question and answering (VQA)~\citep{agrawal2018don} is a counter example. Specifically, the same image with different text questions may have totally different labels, making it pointless to check its uni-modal accuracy.
In tasks where uni-modal priors are essential, recent multi-modal training methods~\cite{wu2022characterizing,peng2022balanced} still suffer from insufficient learning of uni-modal features.
In this paper, we focus on analyzing and improving the performance of late-fusion methods in these tasks.

\subsection{Modality Laziness in Multi-modal Training}
\label{sec3_1: observation_of_laziness}


\textbf{In multi-modal late-fusion learning}, each modality is encoded by its corresponding encoder and then a fusion module is applied on top of them to produce outputs. 
In late-fusion learning, by building a classifier on the trained and frozen encoder~\citep{chen2020simple}, we can assess the learned representations of the encoder~(\textit{i.e.} linear probing). 
We find that recent methods, including G-Blending~\cite{wang2020makes} and OGM-GE~\cite{peng2022balanced}, still suffer from insufficient learning of uni-modal features, namely Modality Laziness:
\begin{itemize}
    \item As Table~\ref{tb:modality_laziness} shows, all encoders from multi-modal joint training are worse than those from uni-modal training, especially the RGB encoder in VGG-Sound and optical flow encoder in UCF101. No matter which optimizer is used~(Appendix~\ref{app_opt}).
    \item As Figure~\ref{fig:laziness_process} shows, throughout the training process, the two encoders mentioned above not only cannot achieve comparable performance to their uni-modal counterparts but are far worse than them. 
\end{itemize}
\subsection{How does a Multi-modal Model Benefit from Multi-modal Training?}
\label{sec3_4: paired}

In Sec~\ref{sec3_1: observation_of_laziness}, we empirically show that recent late-fusion methods still suffer from insufficient learning of uni-modal features. 
A straightforward solution is to train the uni-modal models individually and then combine their predictions to give the final prediction.
However, it raises another question: 
\emph{How does a multi-modal model benefit from multi-modal joint training?} 
We hypothesize cross-modal interaction plays a role and investigate several models with different freedom of cross-modal interactions on VGG-Sound, including 
1) directly average the uni-modal models' predictions, which has few cross-modal interactions; 
2) train a multi-modal linear classifier on top of uni-modal pre-trained but frozen encoders, where modalities can interact with each other through the linear layer;
3) naive fusion or naive multi-modal learning: end-to-end late-fusion learning from scratch without carefully designed tricks, where the modalities can interact more than the two models above.

As Table~\ref{tb:class} shows, in certain classes of VGG-Sound, the accuracy of naive fusion exceeds the sum of the accuracy of the two uni-modal models.
Besides, we find that naive fusion training, which owns maximum freedom of cross-modal interactions among these models, gets the best mean accuracy across these classes. 
And averaging the predictions of uni-modal models, which owns minimum freedom of cross-modal interactions among these models, gets the worse mean accuracy across these classes.
These results suggest that joint training enables the model to learn representations beyond uni-modal features, which we term as \emph{paired features}. We give the formal mathematical definitions of uni-modal features and paired features in Sec~\ref{sec3_2: theory} and offer more explanations on paired features in Appendix~\ref{app_paired}.

\begin{algorithm}[t]
  \caption{Uni-Modal Teacher (UMT)}
  \label{alg:umt}
\begin{algorithmic}
  \STATE {\bfseries Input:} Uni-modal supervised pre-trained models $F_{pretrain}^{m_1}, F_{pretrain}^{m_2}$, random initialized late-fusion multi-modal model $F^{mm}$, iteration number $N$, loss weight $\lambda_{task}, \lambda_{distill}$.
  \FOR{$0$ {\bfseries to} $N$}
  \STATE Sample multi-modal data $\{X^{m_1}, X^{m_2}, Y\} \sim \mathcal{D}$.
  \STATE Compute uni-modal pre-trained features $f_{pre}^{m_1}, f_{pre}^{m_2}$ of the data by $F_{pretrain}^{m_1}, F_{pretrain}^{m_2}$.
  \STATE Compute the prediction and features $\hat{Y}, f^{m_1}, f^{m_2}$ from multi-modal model.
  \STATE Compute the losses between $\hat{Y}, f^{m_1}, f^{m_2}$ and $Y,  f_{pre}^{m_1}, f_{pre}^{m_2}$ and multiply by the $\lambda_{task}, \lambda_{distill}, \lambda_{distill}$, respectively.
  \STATE Update the multi-modal model by SGD or its variant.
  \ENDFOR
  \STATE {\bfseries Return:} A multi-modal model trained by UMT.
\end{algorithmic}
\end{algorithm}

\subsection{Guidance on Multi-modal Learning}
\label{sec3_3: pushing_force}

In Appendix~\ref{app_class}, we analyze the role of cross-modal interaction on more datasets and find that whether it brings benefits or disadvantages highly depends on the task itself, which motivates us to propose that we should choose proper learning method from proposed \textbf{U}ni-\textbf{M}odal \textbf{T}eacher~(UMT) and \textbf{U}ni-\textbf{M}odal \textbf{E}nsemble~(UME) for the given task.

\textbf{UMT.} 
\textbf{U}ni-\textbf{M}odal \textbf{T}eacher (UMT) is proposed for late-fusion training. 
It distills the pre-trained uni-modal features to the corresponding parts of multi-modal late-fusion models.
The framework of UMT is shown in Algorithm~\ref{alg:umt} and Figure~\ref{fig:naive_fusion}. 
More details about UMT can be found in Appendix~\ref{app_umt_details}. 
There are several important differences between UMT and the approach in \citet{wang2020multimodal}. First, the distillation in UMT happens in feature-level but not soft label level. Second, the training of uni-modal models in UMT does not use any additional data compared to the training of the multi-modal model. And our motivation is to enable the multi-modal model to better learn the uni-modal features in the current dataset, rather than introduce additional information to the multi-modal model.

\textbf{UME.} 
\textbf{U}ni-\textbf{M}odal \textbf{E}nsemble~(UME) aims to avoids insufficient learning of uni-modal features by combining predictions of uni-modal models.
Firstly, we can train uni-modal models independently.
Then, we can give final output by weighting the predictions of uni-modal models.


\textbf{An empirical trick to decide which method to use.} 
We can train a multi-modal linear classifier on uni-modal pre-trained encoders and different modalities can interact with each other in the linear layer. 
And then, we compare that with averaging predictions of uni-modal models, which has few cross-modal interactions:
\begin{itemize}
    \item If the performance of the classifier is better, it means we can benefit from cross-modal interactions in this task and we can choose UMT, where cross-modal interactions are preserved while guaranteeing improved learning of uni-modal features;
    \item Otherwise, the cross-modal interactions does more harm than good in the given task, and we can choose UME, which almost avoids cross-modal interactions. 
\end{itemize}

Noting that in UMT and UME, we use the same backbone in uni-modal and multi-modal models for a specified modality. 
And in the following subsection, we give theoretical justification for our solution.

\newcommand{\Combine}{multi-modal training approaches}

\newcommand{\Vote}{uni-modal ensemble}
\newcommand{\Uni}{uni-modal approaches}
\newcommand{\singFeature}{uni-modal feature}
\newcommand{\pairFeature}{paired feature}
\newcommand{\modalOne}{{x^{m_1}}}
\newcommand{\modalTwo}{{x^{m_2}}}

\newcommand{\pFi}{p(f_i)}
\newcommand{\eFi}{\epsilon(f_i)}
\newcommand{\pHi}{p(h_j)}
\newcommand{\eHi}{\epsilon(h_j)}

\newcommand{\kpa}{{k_{\text{pa}}}}
\newcommand{\kmone}{{k_{\text{m1}}}}
\newcommand{\kmtwo}{{k_{\text{m2}}}}
\newcommand{\bmone}{{b_{\text{m1}}}}
\newcommand{\bmtwo}{{b_{\text{m2}}}}

\subsection{Theoretical Characterization and Justification}
\label{sec3_2: theory}

\begin{figure*}[t]
    \centering
    \includegraphics[width=0.8\linewidth]{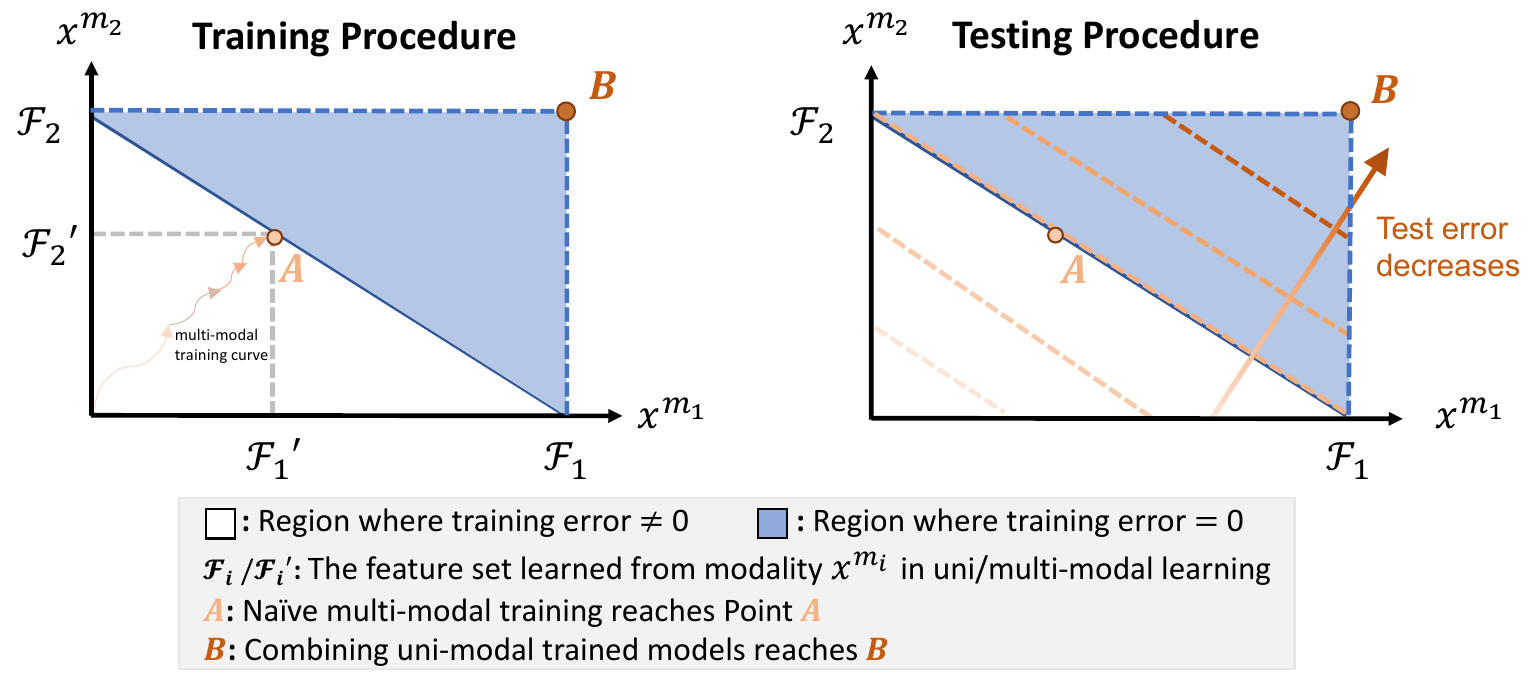}
    \caption{An illustration of the feature learning results of uni-modal training and multi-modal training without considering paired features.
    In uni-modal training, modality $x^{m_i}$ learns feature set $\mathcal{F}_i$. However, naive joint training learns less features of each modality than uni-modal training when getting zero training error~(namely $\mathcal{F}_i^{'}$).
    Uncontroversially, combining predictions of individual trained uni-modal models~(B) outperforms naive joint training~(A).}
    \label{fig:starve}
\end{figure*}

In this subsection, we characterize Modality Laziness of Sec~\ref{sec3_1: observation_of_laziness} from a feature learning perspective and prove it does hurt the generalization of the model. And then, we give justification for the learning guidance proposed in Sec~\ref{sec3_3: pushing_force}.

Before diving into the technical details, we first provide some intuition behind the proof. 
Our goal is to show that how Modality Laziness happens in multi-modal joint training, and we refer to Figure~\ref{fig:starve} as an illustration. 
Here we omit the effect of paired features for easier to understand the intuition.
During the naive multi-modal training process, learning those easy-to-learn features suffices to reach zero training error (point A in Figure~\ref{fig:starve}).
However, the model is  under-trained at point A, and the zero-training-error region stops us from further training.
As a comparison, uni-modal models can learn more features and achieve point B, outperforming point A.

We next give Modality Laziness a theoretical explanation under a simple but effective regime.
We mainly consider cases with two modalities $\modalOne$ and $\modalTwo$, and \emph{similar techniques can be directly generalized to the cases with more modalities}.

\textbf{Data distribution.}
We formalize the distribution of the multi-modal features.
Specifically, we abstract the features into \singFeature s\ and \pairFeature s to describe the core differences between uni-modal training and multi-modal joint training. And we give these two types of features mathematical definitions in Definition~\ref{def:self-standing} and Definition~\ref{def:paired}, respectively.
And we consider the binary classification regime where the label $y$ has a uniform distribution over $\{-1, 1\}$ without loss of generality.
Such simplification is self-contained to describe the differences between \singFeature s\ and \pairFeature s.


\begin{definition}[Uni-modal features, \emph{which can be learned from uni-modal training}]
\label{def:self-standing}
The $i$-th \singFeature\ ($f_i(\modalOne)$) in modality $\modalOne$ is generated as\footnote{We simplify ``with probability'' as ``w.p.''}:
\begin{equation*}
    \begin{split}
        \text{w.p. $p(f_i)$, \ }& y f_i(\modalOne) > 0; \\
        \text{w.p. $1- p(f_i)-\epsilon(f_i)$, \ }& y f_i(\modalOne) = 0;\\
        \text{w.p. $\epsilon(f_i)$, \ }& y f_i(\modalOne) < 0.
    \end{split}
\end{equation*}

The $i$-th \singFeature\ ($g_i(\modalTwo)$) in modality $\modalTwo$ is similarly generated with parameters $p(g_i)$ and $\epsilon(g_i)$.
\end{definition}

\begin{definition}[Paired features, \emph{which can only be learned from cross-modal interaction}]
\label{def:paired}
The $j$-th \pairFeature\footnote{We abuse the notation $h$ to simplify the notations where $h(\modalOne)$ and $h(\modalTwo)$ can have different forms.} $h_j$ is generated as:
\begin{equation*}
    \begin{split}
        \text{w.p. $\pHi$, \ }& y h_j(\modalOne)h_j(\modalTwo) > 0; \\
        \text{w.p. $1- \pHi - \eHi$, \ }& y h_j(\modalOne)h_j(\modalTwo) = 0;\\
        \text{w.p. $\eHi$, \ }& y h_j(\modalOne)h_j(\modalTwo) < 0.
    \end{split}
\end{equation*}
\end{definition}

When the context is clear, we abuse the notation $r_i$ to represent either $f_i$ (\singFeature\ in modality $\modalOne$), $g_i$ (\singFeature\ in modality $\modalTwo$), or $h_i$ (\pairFeature).
We name $p(r_i)$ as the \emph{predicting probability} of feature $r_i$.
When $r_i$ is present (meaning that $r_i \neq 0$), we use $\bI(r_i>0) - \bI(r_i<0)$ to predict $y$.
Otherwise ($r_i=0$), we random guess $y$ uniformly over $\{-1, 1\}$.
To simplify the discussion, we always assume $\eFi = \pFi /c$, where $c>1$ is a fixed constant.
For the ease of notations, we define the empty feature in Definition~\ref{def:empty}.

\begin{definition}[Empty Feature]
\label{def:empty}
Empty feature $e_i$ is a kind of \singFeature\ (or \pairFeature) with $p(e_i) = \epsilon(e_i) = 0$.
\end{definition}

\textbf{Evaluation procedure.} 
When the context is clear, we abuse $r_i$ to denote the learned features.
For each data point, we random guess $\hat{y}$ on $\{-1, 1\}$ uniformly when $\sum_i \bI(r_i>0) = \sum_i \bI(r_i<0)$.
Otherwise, we predict the label by $\hat{y} = 2\bI(\sum_i \bI(r_i >0) > \sum_i \bI(r_i<0)) - 1$. 
We define the error as $ \sum_i \bI(y r_i<0) - \sum_i \bI(y r_i>0) $.

\textbf{Training procedure.}
(a.) \emph{multi-modal joint training}, which directly train the model using both modality $\modalOne$ and modality $\modalTwo$;
(b.) \emph{uni-modal ensemble}, which firstly train the features via independent training~($\modalOne$ and $\modalTwo$ separately), and then combine the $\modalOne$-learned features and $\modalTwo$-learned features.

During the training process, we first initialize all the features with empty features $e_i$ to imitate random initialization.
The models then learn the features in descending order of predicting probability, meaning that the powerful features (with large predicting probability) are learned first\footnote{Recent works have demonstrated that neural networks indeed prefer easy-to-learn features \citep{shah2020pitfalls,pezeshki2020gradient}.}.
Our goal is to minimize the training error to zero\footnote{We always assume that the training error can be minimized to zero.}.

We now state our main theorem in Theorem~\ref{thm:combineoverfitting}, demonstrating naive joint training learn fewer uni-modal features compared to uni-modal training, which hurts the model's generalization.


\begin{restatable}{theorem}{Combineoverfitting}
\label{thm:combineoverfitting}
In \Vote, assume that the training procedure learns $\bmone$ features in modality $\modalOne$ and learns $\bmtwo$ features in modality $\modalTwo$.
We order the probability of \singFeature s\ (both $\modalOne\ $ and $\modalTwo$) in decreasing order of predicting probability $p$, namely, $p_{[1]}, p_{[2]}, \dots$.
In \Combine, assume that the training procedure learns $\kmone$ \singFeature s\ in modality $\modalOne$, learns $\kmtwo$ \singFeature s\ in modality $\modalTwo$, and learns $\kpa$ \pairFeature s\ with predicting probability $p(h_1), \dots, p(h_{\kpa})$.

We provide three types of laziness:
\begin{enumerate}
    \item[(a. )] \textbf{Quantity Laziness}: $\kmone+ \kmtwo +\kpa\leq \min\{\bmone, \bmtwo\}$.
    \item[(b. )] \textbf{Uni-modal Laziness}: Each modality in \Combine\ performs worse than uni-modal training.
    \item[(c. )] \textbf{Performance Laziness}:
    Consider a new testing point, then for every $\delta>0$, if the following inequality holds:
    $$\sum_{i \in [\kpa]} p(h_i) \leq \sum_{i \in [\bmone+1, \bmone+\bmtwo]} p_{[i]} + \Delta(\delta),
    $$
    where $\Delta(\delta) = \sqrt{8 (\kpa + \bmone - \kmone + \bmtwo - \kmtwo) \log(1/\delta)}$, then with probability\footnote{The probability is taken over the randomness of the testing point} at least $1-\delta$, \Vote\ outperform \Combine\ concerning the loss on the testing point with probability.
\end{enumerate}
\end{restatable}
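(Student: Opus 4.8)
The plan is to handle the three claims in order, since each builds on the previous one. For part (a), the key observation is that in uni-modal ensemble we learn features greedily in descending order of predicting probability until the training error is zero; the same stopping rule governs multi-modal joint training, except that there the ``budget'' of learnable uni-modal features is depleted early because paired features, being jointly present, also contribute to driving the training error to zero. So I would first argue that once the combined count $\kmone + \kmtwo + \kpa$ of learned features in joint training suffices to zero out the training error, the greedy process halts; and since the analogous halting condition for a single modality requires $\bmone$ (resp.\ $\bmtwo$) uni-modal features \emph{alone}, and the features are drawn from the same pool ordered by $p$, one gets $\kmone + \kmtwo + \kpa \le \min\{\bmone,\bmtwo\}$. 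The cleanest route is a counting/monotonicity argument: define the ``error-reduction'' contributed by a set of learned features and show it is monotone in the number of high-$p$ features included, then compare the minimal sets realizing zero training error in each regime.

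For part (b), I would leverage (a) together with the fact that the features are learned in decreasing order of $p$: the $\kmone$ uni-modal features that joint training learns in modality $\modalOne$ are (a subset of) the top-$\kmone$ uni-modal features of that modality, whereas independent uni-modal training of modality $\modalOne$ learns its top-$\bmone \ge \kmone$ features (since $\kmone \le \kmone+\kmtwo+\kpa \le \bmone$). Because each additional genuine feature has strictly positive predicting probability and hence strictly reduces expected error, the uni-modally trained encoder dominates the corresponding sub-encoder extracted from the joint model. This is essentially a ``more good features is better'' statement and should follow from a short stochastic-dominance argument on the per-feature vote contributions.

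For part (c), this is where the real work lies. I would set up the test-time error as a sum of independent bounded contributions: each learned feature $r_i$ casts a vote that equals $+1$ with probability $p(r_i)$ (agreeing with $y$), $-1$ with probability $\epsilon(r_i) = p(r_i)/c$, and $0$ otherwise, and the prediction is the sign of the aggregate vote (with ties broken by random guessing). The expected ``net correct vote'' for a set of features is $\sum (p(r_i) - \epsilon(r_i)) = (1 - 1/c)\sum p(r_i)$, so comparing uni-modal ensemble against joint training reduces to comparing $\sum_{i \in [\bmone+1,\bmone+\bmtwo]} p_{[i]}$ (the extra uni-modal features the ensemble enjoys, beyond those joint training also has) against $\sum_{i\in[\kpa]} p(h_i)$ (the paired features joint training enjoys that the ensemble lacks). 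I would then apply Hoeffding's inequality to the difference of the two aggregate-vote random variables: the number of ``disagreeing'' feature-slots between the two schemes is $\kpa + (\bmone - \kmone) + (\bmtwo - \kmtwo)$, which is exactly what sits inside the square root in $\Delta(\delta)$. So the hypothesis $\sum_{i\in[\kpa]} p(h_i) \le \sum_{i\in[\bmone+1,\bmone+\bmtwo]} p_{[i]} + \Delta(\delta)$ is precisely the statement that the mean gap plus the Hoeffding fluctuation term still favors the ensemble, giving the claimed probability-$(1-\delta)$ conclusion.

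The main obstacle I anticipate is part (c)'s bookkeeping: making precise which feature-slots are shared versus disagreeing between the two training schemes (so that the correct count $\kpa + \bmone - \kmone + \bmtwo - \kmtwo$ appears under the square root), and handling the tie-breaking/random-guessing rule carefully so that the ``error'' as defined, $\sum_i \bI(y r_i < 0) - \sum_i \bI(y r_i > 0)$ aggregated through the sign rule, really is a sum of independent bounded increments to which Hoeffding applies cleanly. A secondary subtlety is that parts (a) and (b) implicitly require that the greedy ``learn in order of $p$'' dynamics are well-defined and that zero training error is attainable, which the paper has assumed; I would state that reliance explicitly rather than re-deriving it.
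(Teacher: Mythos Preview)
Your overall architecture matches the paper's: part (a) via a feature-by-feature comparison, part (b) as a corollary of (a), and part (c) via Hoeffding on the symmetric difference of learned feature sets. However, there are two places where your sketch glosses over exactly the step that carries the weight.

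\textbf{Part (a): the missing replacement lemma.} You propose to ``show [error-reduction] is monotone in the number of high-$p$ features included.'' The paper's actual engine is a \emph{replacement} statement (its Lemma~1): if you swap any one feature for another with strictly larger predicting probability, the overall predicting probability does not decrease. This is what lets you argue that the $j$-th feature learned in joint training (drawn from the larger pool of all $f$'s, $g$'s, $h$'s) dominates the $j$-th feature learned in uni-modal training (drawn only from the $f$'s), so joint training reaches zero training error in no more steps than uni-modal does. The lemma is not a one-liner: because the classifier is a majority vote with random tie-breaking, its proof requires a case split on whether the number of nonzero votes among the remaining features is even or odd, and in the odd case the two sub-events $\sum yr_i=+1$ and $\sum yr_i=-1$ must be paired up using the assumption $\epsilon(r)=p(r)/c$ to show the net effect is independent of the swapped feature. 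Your ``counting/monotonicity'' phrasing does not anticipate this, and without it the argument does not close.

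\textbf{Part (b): paired features in a single modality.} You compare only the $\kmone$ uni-modal features $f_1,\dots,f_{\kmone}$ learned by joint training against the $\bmone$ features learned uni-modally. The paper is more careful: the $\modalOne$-encoder from joint training also carries the $\modalOne$-parts $h_1(\modalOne),\dots,h_{\kpa}(\modalOne)$ of the paired features. One must argue that each such $h_j(\modalOne)$, viewed as a uni-modal feature, has predicting probability no larger than the $f$'s that uni-modal training selected (else uni-modal training would have learned it instead). Only then, combining ``fewer features'' with ``individually weaker features,'' does the domination follow.

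\textbf{Part (c): one indexing slip.} The extra features the ensemble has over joint training are the specific $f_{\kmone+1},\dots,f_{\bmone}$ and $g_{\kmtwo+1},\dots,g_{\bmtwo}$, not directly the order statistics $p_{[\bmone+1]},\dots,p_{[\bmone+\bmtwo]}$. The paper uses part~(a) and the greedy ordering to show $\sum_{\kmone<i\le\bmone}p(f_i)+\sum_{\kmtwo<i\le\bmtwo}p(g_i)\ge\sum_{i\in[\bmone+1,\bmone+\bmtwo]}p_{[i]}$, which then feeds into the Hoeffding bound. Your count of disagreeing slots and the Hoeffding application itself are correct.
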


\begin{figure*}[t]
\begin{minipage}[t]{\textwidth}
\begin{minipage}[t]{0.5\textwidth}
\makeatletter\def\@captype{table}
\centering
\caption{Top-1 test accuracy of averaging uni-modal predictions and multi-modal classifier trained on uni-modal pre-trained encoders.}
\vspace{5pt}
\begin{tabular}{l|cc}
    \toprule
    \textbf{Dataset}  &\textbf{MM Clf}   & \textbf{Avg Preds}  \\
    \midrule
    VGG-Sound & \textbf{51.0} & 46.1   \\
    Kinetics-400    & \textbf{76.4} & 74.8 \\
    UCF101 & 84.4   &  \textbf{86.8} \\
    ModelNet40 & 91.7  & \textbf{91.9}\\
    \bottomrule
\end{tabular}
\label{tb:avg_linearclf}
\end{minipage}
\begin{minipage}[t]{0.5\textwidth}
\makeatletter\def\@captype{table}
\centering
\caption{Top-1 test accuracy of uni-modal classifiers from the multi-modal linear classifier trained on uni-modal pre-trained encoders in UCF101. This evaluation method borrows from \citet{peng2022balanced}.}
	\vspace{5pt}
	\begin{tabular}{c| cc}
	\toprule
	\textbf{Model}  & \textbf{RGB} & \textbf{Opt-flow} \\
	\midrule
	    Uni-Clf from MM Clf & 68.2&  52.9\\
	    Uni-Modal Model  & \textbf{77.1} &\textbf{75.0}\\
	   
	 \bottomrule
\end{tabular}
\label{tb:uni_from_mm}
\end{minipage}
\end{minipage}
\end{figure*}

\begin{figure*}[t]
\begin{minipage}[t]{\textwidth}
\begin{minipage}[t]{0.5\textwidth}
\makeatletter\def\@captype{table}
\centering
\caption{Results of different late-fusion methods~(top-1 test accuracy). * means the result comes from its original paper}
\vspace{1pt}
\begin{tabular}{l|cc}
    \toprule
    \textbf{Method}  &  \textbf{VGG-Sound}  & \textbf{Kinetics-400}\\
    \midrule
    Linear-Head & 49.5& 74.3  \\
    MLP-Head    & 44.8& 74.8 \\
    Atten-Head &  49.8  & 74.1 \\
    Aux-CELoss & 49.9 &73.2\\
    G-Blending & 50.4 &75.8* \\
    OGM-GE & 50.6* & 74.5 \\
    \midrule
    UMT~(ours) & \textbf{53.5} & \textbf{76.8}\\
    \bottomrule
\end{tabular}
\label{tb:umt_classification}
\end{minipage}
\begin{minipage}[t]{0.43\textwidth}
\makeatletter\def\@captype{table}
\centering
\caption{Top-1 test accuracy of UMT and Audio-Visual SlowFast~\citep{xiao2020audiovisual} on Kinetics-400. AVSlowFast is an representative intermediate fusion method. }
\vspace{5pt}
\begin{tabular}{l|c|c}
    \toprule
    \textbf{Method}  & \textbf{RGB Encoder}   & \textbf{Acc}      \\
    \midrule
    AVSlowFast & SlowFast-50& 77.0\\
    UMT~(ours) &   SlowFast-50&     \textbf{78.1}\\
    
    \midrule
    AVSlowFast &   SlowFast-101 & 78.8    \\
    UMT~(ours) &   SlowFast-101 &   \textbf{79.4} \\

    \bottomrule
\end{tabular}
\label{tb:UMT_AVSlowFast}
\end{minipage}
\end{minipage}
\end{figure*}

In theorem~\ref{thm:combineoverfitting}, we describe three notations of laziness problem:
\textbf{Quantity Laziness} indicates that the number of features learned in naive multi-modal training is less than uni-modal training.
\textbf{Uni-modal Laziness} shows encoders from multi-modal training perform worse than from uni-modal training because of Quantity Laziness, which fits the experimental results in sec3.1.
\textbf{Performance Laziness} compares the performance of multi-modal joint training approaches with Uni-Modal Ensemble, demonstrating that when uni-modal features dominate, combining uni-modal predictions is effective.
We defer the complete proof to Appendix \ref{append:proof} and generalize that to more modalities~(Appendix \ref{generalize_multi_modal}).
We give a concrete example in Appendix B.3 to better illustrate Theorem~\ref{thm:combineoverfitting}.


We next prove that UMT proposed in Sec~\ref{sec3_3: pushing_force} indeed helps uni-modal feature learning and can also learn some easy-to-learn paired features in Theorem~\ref{thm:umt} and Appendix~\ref{append:proof_umt}.


\begin{restatable}{theorem}{umt}
\label{thm:umt}
Denote the paired features\ by $h_1, \dots h_L$ with corresponding predicting probability $p(h_1), \dots, p(h_L)$.
Assume that distillation can boost the training priority by $p^0>0$.
If there exists paired features whose predicting probability exceeds the boosting probability $p^0$, namely, the set $\mathcal{S}$ is not empty:
$$
\mathcal{S} = \{h_i: p(h_i) > p^0 \} \neq \phi.
$$
Then UMT helps uni-modal feature learning and can also learn easy-to-learn paired features.
\end{restatable}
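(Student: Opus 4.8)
The plan is to track, under the training dynamics already fixed in the paper — features are acquired in descending order of predicting probability, and training halts once the training error is driven to zero — the set of features that the UMT objective forces the multi-modal model to learn, and then to compare this set both with the set learned by naive joint training and with what uni-modal training achieves. Concretely, I would show that UMT learns a feature set that contains $\mathcal{F}_1 \cup \mathcal{F}_2 \cup \mathcal{S}$, where $\mathcal{F}_1, \mathcal{F}_2$ are the uni-modal feature sets acquired by uni-modal training and $\mathcal{S}$ is the nonempty set of paired features from the hypothesis.

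First I would handle the distillation term. The uni-modal pre-trained models $F_{pretrain}^{m_1}$ and $F_{pretrain}^{m_2}$ in Algorithm~\ref{alg:umt} have, by the same learning rule applied in the uni-modal regime, already acquired the full uni-modal feature sets $\mathcal{F}_1$ and $\mathcal{F}_2$. Because UMT adds a feature-level distillation loss with weight $\lambda_{distill}$ on each modality's encoder, driving that loss down forces the corresponding multi-modal encoder to reproduce every feature in $\mathcal{F}_1 \cup \mathcal{F}_2$. This already yields the first half of the claim: UMT learns at least as many uni-modal features per modality as uni-modal training, so in particular it suffers no Quantity or Uni-modal Laziness on the uni-modal side — a strict improvement over naive joint training by Theorem~\ref{thm:combineoverfitting}(a,b).

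Next I would handle the task term together with the stated boost. I read the hypothesis that "distillation can boost the training priority by $p^0$" as saying that, in the combined learning order, the distilled uni-modal features enter at the priority level of a feature with predicting probability $p^0$; equivalently, any paired feature $h_i$ with $p(h_i) > p^0$ is acquired before the uni-modal part of the model is saturated by distillation. Since $\mathcal{S} = \{h_i : p(h_i) > p^0\}$ is nonempty by assumption, the multi-modal model therefore also learns every paired feature in $\mathcal{S}$; and if, after learning $\mathcal{F}_1 \cup \mathcal{F}_2 \cup \mathcal{S}$, the task training error is still positive, it keeps acquiring paired features in descending-probability order until it reaches zero, so the learned set always contains $\mathcal{S}$. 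These are exactly the easy-to-learn paired features, which establishes the second half. Assembling the two pieces, UMT learns a feature set containing $\mathcal{F}_1 \cup \mathcal{F}_2 \cup \mathcal{S}$, which simultaneously dominates the uni-modal feature content of uni-modal training and contains a nonempty collection of paired features that naive joint training, with its single zero-training-error stopping barrier, need not reach.

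The step I expect to be the main obstacle is making the interaction between the two loss terms precise: one must argue that insisting on small distillation loss does not short-circuit the task-loss-driven acquisition of the paired features in $\mathcal{S}$ — this is precisely what the $p^0$-priority reading is designed to capture — and, conversely, that the distilled uni-modal features do not by themselves collapse the task error to zero before any paired feature is seen. Pinning down the ordering between the "$p(h_i) > p^0$" paired features and the distilled uni-modal block, and checking the edge cases where $\mathcal{F}_1 \cup \mathcal{F}_2$ alone already yields zero task error, is where the care is needed; the rest is a direct bookkeeping argument mirroring the proof of Theorem~\ref{thm:combineoverfitting}.
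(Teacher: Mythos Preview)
Your proposal arrives at the right conclusion --- UMT learns $\mathcal{F}_1 \cup \mathcal{F}_2 \cup \mathcal{S}$ --- and the overall bookkeeping structure matches the paper's. The one substantive difference is in how you read the priority boost. The paper's intended reading, made explicit in Example~\ref{eg:2}, is \emph{additive}: distillation raises each uni-modal feature's priority from $p(f_i)$ to $p(f_i) + p^0$, not to the flat level $p^0$. The role of the threshold $p^0$ then comes from the \emph{empty features} (Definition~\ref{def:empty}): the teacher also carries empty slots with $p(e_i) = 0$, distillation boosts these to priority $p^0$, and a paired feature $h_j$ is learned in place of an empty slot precisely when $p(h_j) > p^0$, i.e., when $h_j \in \mathcal{S}$. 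That comparison against the boosted empty features --- not against the boosted uni-modal features --- is the paper's entire argument.

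Your alternative framing --- treating the distillation loss as a separate constraint that must be driven to zero, which forces $\mathcal{F}_1 \cup \mathcal{F}_2$ regardless of the task-loss stopping barrier --- is a legitimate route to the same learned set, and it arguably makes the interaction between the two loss terms more explicit than the paper does. The edge case you flag (what if $\mathcal{F}_1 \cup \mathcal{F}_2$ alone already gives zero task error) is real, and the paper's short proof simply does not address it; both arguments are informal on that point. In short: correct approach, minor misreading of the boost mechanism, and the empty-feature comparison is the one ingredient from the paper you should swap in.
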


\begin{figure*}[t]
\begin{minipage}[t]{\textwidth}
\begin{minipage}[t]{0.5\textwidth}
\makeatletter\def\@captype{table}
\centering
\caption{Top-1 test accuracy of the encoders trained by naive multi-modal training and UMT.}
\vspace{5pt}
\begin{tabular}{c c c c c }
		\toprule
		\multirow{2}*{\textbf{Methods.}} & \multicolumn{2}{c}{\textbf{VGG-Sound}} & \multicolumn{2}{c}{\textbf{Kinetics-400}}\\
		\cline{2-5} & RGB & Audio & RGB & Audio\\
		\midrule
		Uni-Train & 23.2  & 45.2 & 74.1 & \textbf{23.5}\\
	    \cdashline{1-5}
		MM Baseline & 15.9 &  43.4 & 72.9 & 18.3 \\
		UMT  & \textbf{24.4}  & \textbf{45.9} & \textbf{74.6} & 21.6\\
		\bottomrule
\end{tabular}
\label{tb:improve_encoders}

\end{minipage}
\begin{minipage}[t]{0.5\textwidth}
\makeatletter\def\@captype{table}
\centering
\caption{Self-Distillation vs UMT on VGG-Sound. }
\vspace{5pt}
\begin{tabular}{l|c}
    \toprule
    \textbf{Method}     & \textbf{Top-1 Test Acc}      \\
    \midrule
    Baseline &   49.5     \\
    Self-Distill (label) & 49.7              \\
    Self-Distill (feature)  & 49.9           \\
    \midrule
    UMT & \textbf{53.5}\\
    \bottomrule
\end{tabular}

\label{tb:abla_distill}

\end{minipage}
\end{minipage}
\end{figure*}

\begin{figure*}[t]
\begin{minipage}[t]{\textwidth}
\begin{minipage}[t]{0.5\textwidth}
\makeatletter\def\@captype{table}
\centering
\caption{Comparison Uni-Modal Ensemble with other joint training methods on UCF101.}
\vspace{5pt}
\begin{tabular}{l|c}
    \toprule
    \textbf{Method}  &   \textbf{Top-1 Test Acc}\\
    \midrule
    Linear-Head & 82.3  \\
    MLP-Head    & 80.0 \\
    Atten-Head &  74.2\\
    Aux-CELoss & 81.3\\
    G-Blending & 83.0 \\
    OGM-GE & 84.0 \\
    \midrule
    UME~(ours) & \textbf{86.8}\\
    \bottomrule
\end{tabular}
\label{tb:ucf-ume}

\end{minipage}
\begin{minipage}[t]{0.5\textwidth}
\makeatletter\def\@captype{table}
\centering
\caption{Comparison between Uni-Modal Ensemble and balanced multi-modal learning algorithm~\citep{wu2022characterizing} on ModelNet40. * means the result comes from \cite{wu2022characterizing}.}
\vspace{5pt}
\begin{tabular}{l|c}
    \toprule
    \textbf{Method}     & \textbf{Top-1 Test Acc}      \\
    \midrule
    multi-modal~(vanilla) &   90.09 $\pm$ 0.58*     \\
     +RUBi & 90.45 $\pm$ 0.58*             \\
     +random & 91.36 $\pm$ 0.10*             \\
     +guided & 91.37 $\pm$ 0.28*            \\
    \midrule
    UME~(ours) & \textbf{91.92 $\pm$ 0.14}\\
    \bottomrule
\end{tabular}

\label{tb:modelnet_ume}
\end{minipage}
\end{minipage}
\end{figure*}

\section{Experiments}
\label{sec4:exp}
In Sec~\ref{sec3_2: theory}, we justify our method theoretically.
In this section, we firstly introduce the experimental setup and then demonstrate that our simple solution is effective in various multi-modal tasks.

\subsection{Experimental Setup}
We run experiments on four datasets.
\emph{Kinetics-400}~\citep{kay2017kinetics} is a video recognition dataset with 240k videos for training and 19k for validation. We treat the two modalities, RGB and audio, as the inputs. 
\emph{VGG-Sound} \citep{chen2020vggsound} is an audio-visual classification dataset which contains over 200k video clips for 309 different sound classes. 
\emph{UCF101} \citep{soomro2012ucf101} is an action recognition dataset with 101 action categories, including 7k videos for training and 3k for testing. 
\emph{ModelNet40} is a 3D object classification task with 9,483 training
samples and 2,468 test samples. Following \citet{wu2022characterizing}, we treat the front and rear view as two modalities. Due to limited space, we put other experimental details in Appendix~\ref{appendix:dataset} and~\ref{app_hyper}.


\subsection{An empirical trick to decide which learning method to use.}
We train a multi-modal linear classifier on frozen uni-modal pre-trained encoders and compare that with averaging uni-modal predictions.
As Table~\ref{tb:avg_linearclf} shows, in VGG-Sound and Kinetics-400, the classifier is better, meaning cross-modal interaction can benefit the classifier in the two datasets. 
However, in UCF101 and ModelNet40, averaging uni-modal predictions performs well.
To explore why the classifier fails in UCF101, we check the uni-modal classifiers of the newly trained multi-modal linear layer~(details in Appendix~\ref{app_uniclf_from_mmclf}). As Table~\ref{tb:uni_from_mm} shows, they are far worse than the uni-modal models. 
The result shows the \emph{simple linear classifier suffers from serious Modality Laziness in UCF101}, which negatively impact the performance. 
Both modalities in ModelNet40 also have strong uni-modal features and can achieve 89\% accuracy individually. Averaging uni-modal predictions avoids laziness problem and achieves competitive performance.
Based on the above analysis, we perform UMT on VGG-Sound and Kinetics-400, and UME on UCF101 and ModelNet40.

\subsection{UMT is an effective regularizer}
In this subsection, we demonstrate that Uni-Modal Teacher outperforms other multi-modal training methods in VGG-Sound and Kinetics-400.

\textbf{UMT vs Other Late-Fusion Methods.} 
The late-fusion architecture is commonly used for multi-modal classification tasks~\citep{wang2020makes,peng2022balanced}.
In late-fusion architecture, the features are extracted from different modalities by the corresponding encoders, and then the head layer is applied to output predictions.
We compare different heads, including linear layer, MLP, and attention layer. 
In UMT, we use a simple linear layer as the multi-modal head.
We also conduct another experiment, which adds extra uni-modal linear heads to receive the uni-modal features and generating additional losses to joint optimize the model, namely Auxiliary-CEloss.
Auxiliary-CEloss gives all losses equal weights, while G-Blending reweights the losses according to the overfitting-to-generalization-ratio (OGR)~\citep{wang2020makes}. 
OGM-GE~\citep{peng2022balanced} controls the optimization of each modality by online gradient modulation.
As shown in Table~\ref{tb:umt_classification}, UMT outperforms other methods.

\textbf{UMT vs AVSlowFast.} Audio-Visual SlowFast is an representative intermediate fusion method. We compare UMT with AVSlowFast in Kinetics-400. As Table~\ref{tb:UMT_AVSlowFast} shows, under different RGB encoders, UMT consistently exceeds AVSlowFast, although we cannot reproduce their results due to the dynamics of Kinetics-400~(Appendix~\ref{app_uni_model-kinetics}).

\textbf{Ablation Study of UMT.} 
We first evaluate the encoders of UMT by training linear classifiers on them. As Table~\ref{tb:improve_encoders} shows,  UMT makes its encoders stand out. Benefiting from uni-modal distillation, some encoders even outperform their uni-modal counterparts.
We then compare UMT with classic self-distillation methods~(distillation on soft label~\citep{hinton2015distilling} and feature~\citep{romero2014fitnets}).
As Table~\ref{tb:abla_distill} shows, naive self-distillation can only bring limited improvement. 
These results show that UMT improves overall performance by improving the uni-modal feature learning instead of directly knowledge distillation.

\subsection{Uni-Modal Ensemble in Multi-modal learning}
In this subsection, we demonstrate that Uni-Modal Ensemble is effective on multi-modal datasets where  modalities have strong uni-modal features, outperforming other complex methods. \emph{Even though we don't combine these uni-modal predictions in any special way, but simply average.}

\textbf{In UCF101}, we compare Uni-Modal Ensemble with various multi-modal late-fusion methods. As Table~\ref{tb:ucf-ume} shows, although Gradient Blending~\citep{wang2020makes} and OGM-GE~\citep{peng2022balanced} outperforms baseline methods, they are far worse than Uni-Modal Ensemble.

\textbf{In ModelNet40}, the main comparing methods come from \citet{wu2022characterizing}, which uses a multi-modal DNN with intermediate fusion. It proposes a balanced multi-modal algorithm which balances conditional utilization of each modality by re-balancing the optimization step.
UME surpass their balanced multi-modal algorithm, as Table~\ref{tb:modelnet_ume} shows.

\subsection{Practicality and Reproducibility}

Besides showing better performance, our method does not need an extra split of data to re-weight the losses and re-train the model~\cite{wang2020makes} or tuning too many hyper-parameters~\cite{peng2022balanced}. Compared with naive late-fusion learning, we only need to tune one more hyper-parameter: the weight of distillation loss in UMT. 
As for reproducing our experimental results, the details can be found at Sec~\ref{sec3_3: pushing_force} and Appendix~\ref{app_hyper}. We also provide our code in supplementary material in case of ambiguity.
\section{Conclusion}
This paper analyzes Modality Laziness in multi-modal training and proves that it does hurt the overall performance. 
We propose to choose proper learning method from UME and proposed UMT according to the distribution of uni-modal and paired features and demonstrate its effectiveness.

\section*{Acknowledgments}
This work is supported by the National Key R\&D Program of China (2022ZD0161700).

\nocite{langley00}

\bibliography{example_paper}
\bibliographystyle{icml2023}

\newpage
\appendix
\onecolumn


\section{Experimental Details and Additional Experiments}
\subsection{Datasets}
\label{appendix:dataset}

Here, we describe the preprocessing of Kinetics-400, VGG-Sound, UCF101 and ModelNet40 in detail.

\textbf{Kinetics-400} dataset~\citep{kay2017kinetics} contains over 240k videos for training and 19k for validation, which we download from cvdfoundation ~\footnote{https://github.com/cvdfoundation/kinetics-dataset}. Kinetics-400 is a commonly used dataset with 400 classes, and we mainly follow the open source preprocessing methods to process that. For RGB modality, we follow the procedure of PySlowFast~\footnote{https://github.com/facebookresearch/SlowFast/}, which resizes the video to the short edge size of 256.
and for audio modality, we follow mmaction2~\footnote{https://github.com/open-mmlab/mmaction2/blob/master/tools/data/build\_audio\_features.py/} to extract specgram features. When performing joint training, we take consecutive 64 frames from a video with fps of 30 and random crop the video to 224*224, and for audio inputs, we take the specgram that can be aligned in time with the clip extracted from the video. When testing, we ensemble the predictions from uniformly sampled clips with RGB and audio from a video and give the final outputs, following PySlowfast.

\textbf{VGG-Sound} dataset \citep{chen2020vggsound}, which contains over 200k video clips for 309 different sound classes, is also used for evaluating our method. 
It is an audio-visual dataset \textit{in the wild} where each object that emits sound is also visible in the corresponding video clip, making it suitable for scene classification tasks. Please note that some clips in the dataset are no longer available on YouTube, and we actually use about 175k videos for training and 15k for testing, but the number of classes remains the same. We design a preprocessing paradigm to improve training efficiency as follows: (1) each video is interpolated to 256$\times$256 and saved as stacked images; (2) each audio is first converted to 16 kHz and 32-bit precision in the floating-point PCM format, then randomly cropped or tiled to a fixed duration of 10s. For video input, 32 frames are uniformly sampled from each clip before feeding to the video encoder. While for the audio input, a 1024-point discrete Fourier transform is performed using nnAudio \citep{cheuk2020nnaudio}, with 64 ms frame length and 32 ms frame-shift. And we only feed the magnitude spectrogram to the audio encoder. 

\textbf{UCF101} dataset \citep{soomro2012ucf101} is an action recognition dataset with 101 action categories, including 7k videos for training and 3k for testing. And we use the rgb and flow provided by~\citet{feichtenhofer2016convolutional}. 
For RGB, we use one image of $(3* 224* 224)$ as the input; while for flow, we use a stack of optical flow images which contained 10 x-channel and 10 y-channel images, So its input shape is $(20 * 224 * 224)$. During training, we perform random crop and random horizontal flip as the data augmentation; while testing, we resize the image to $224$ and do not perform data augmentation operations.

\textbf{ModalNet40} is a 3D object classification dataset with 9,483 training samples and 2,468 test samples. We base on the front view and the rear view of the 3D object to classify that, following~\citet{wu2022characterizing}.

\subsection{Training Hyperparameters}
\label{app_hyper}
In VGG-Sound, UCF101 and ModelNet40, we use ResNet as our backbone, all with 18 layers. As for Kinetics-400, we use 50 or 101 layers' ResNet to encode the inputs. Noting that 3D CNN is used for visual data of VGG-Sound and Kinetics-400. 

In UMT, we use cross-entropy loss as the task loss and MSELoss as the distillations loss. The weights of task loss and distillation loss are 1 and 50, respectively. In UME, we directly average the uni-modal models' predictions and no additional training is added.

We show the hyperparameters of our experiments in UCF101 and VGG-Sound in Table~\ref{tb:hyper-details}.

As for Kinetics-400's RGB modality, we totally follow the hyperparameters and settings of PySlowFast~\footnote{https://github.com/facebookresearch/SlowFast/configs/Kinetics/SLOWFAST\_8x8\_R50.yaml}. 
As for audio modality, we modify the hyperparameters~\footnote{open\-mmlab/mmaction2/blob/master/configs/recognition\_audio/resnet/tsn\_r18\_64x1x1\_100e\_kinetics400\_audio\_feature.py} to be as consistent as possible with the RGB training for further joint training. Specifically, we use the same learning rate and batch size as RGB training used. 

As for ModelNet40, we totally follow the experimental settings of \citet{wu2022characterizing}~\footnote{https://github.com/nyukat/greedy\_multimodal\_learning}.

\begin{table}[t]
\centering
\caption{The Hyperparameters used in our experiments for VGG-Sound and UCF101.}
\vspace{5pt}
\begin{tabular}{lll}
\toprule
Hyperparameter & Value (VGG-Sound) & Value (UCF101) \\

\specialrule{0.05em}{2pt}{2pt} 
Encoder & ResNet3D (Video), 2D (Audio) & ResNet2D(Both Modalities)\\
Linear Head & (1024, 309) & (1024, 101)\\
MLP Head & (1024, 1024)& (1024, 1024) \\
    & ReLU & ReLU\\
    &(1024, 309) & (1024, 101)\\
 Attension Head &   \multicolumn{2}{c}{ Attension Layer (without new parameters) + a linear layer} \\
Training Epoches & 20 & 20 \\
LR & 1e-3 & 1e-2 \\
Batch Size & 24 & 64 \\
Optimizer & Adam & SGD\\
Scheduler & StepLR (step=10, gamma=0.1) & ReduceLROnPlateau (patience=1)\\
Loss Fusion & \multicolumn{2}{c}{ Cross Entropy for task, MSE for distillation}\\
\bottomrule
\end{tabular}
\label{tb:hyper-details}
\end{table}

\subsection{Can existing optimizers solve Modality Laziness?}
\label{app_opt}

\begin{table*}[ht]
	\centering
	\renewcommand{\arraystretch}{1.1}
	\caption{Top-1 test accuracy (in \%) of linear classifiers trained on frozen encoders from multi-modal late-fusion training under different optimizers and uni-modal training on VGG-Sound. }
	\vspace{5pt}
	\begin{tabular}{c| c |c c }
		\toprule
		Optimizer & Multi-modal Performance & Audio Encoder & RGB Encoder \\
		
		\midrule
		
		SGD & 47.13 & 40.02  & 15.53\\
		RMSprop & 47.90 & 42.77  & 13.64 \\
		Adagrad & 42.19 & 35.68  & 19.65\\
		Adadelta & 23.18 & 17.70  & 17.37\\
		Adamw & 49.39 & 42.41  & 15.11\\
		Adam & 49.47 & 43.44  & 15.56\\
		
		\midrule
		Uni-Training & / & \textbf{45.15} & \textbf{23.17}  \\
		
		\bottomrule
	\end{tabular}
	\label{tb:different_optimizers}
\end{table*}

While the results in Table~\ref{tb:modality_laziness} show that different multi-modal methods suffer from learning insufficient uni-modal features, \emph{how about changing the optimizer}? To answer this question, we try different optimizers for multi-modal late-fusion training~(with a linear multi-modal head), including SGD, RMSprop, Adagrad, Adadelta, Adamw and Adam. As Table~\ref{tb:different_optimizers} shows, Modality Laziness exists no matter which optimizer is used.

\subsection{Details on Uni-Modal Teacher~(UMT)}
\label{app_umt_details}
In this subsection, we describe how Uni-Molda Teacher (UMT) applies on multi-modal late-fusion tasks. The overall architecture can be found in Figure~\ref{fig:naive_fusion}.

\textbf{UMT in late-fusion classification.} In multi-modal late-fusion architecture, modalities are first encoded by the corresponding encoders and then mapped to the output space by a multi-modal fusion head (Figure~\ref{fig:naive_fusion} left). 
Uni-Modal Teacher distills the pre-trained uni-modal features to the corresponding parts in multi-modal networks in multi-modal training (Figure~\ref{fig:naive_fusion} right). Uni-modal distillation happens before fusion, so it's suitable for late-fusion multi-modal architecture. 
The pre-trained uni-modal features are generated by inputting the data to the pre-trained uni-modal models.

\begin{figure}[t]
    \centering
    \includegraphics[width=0.8\textwidth]{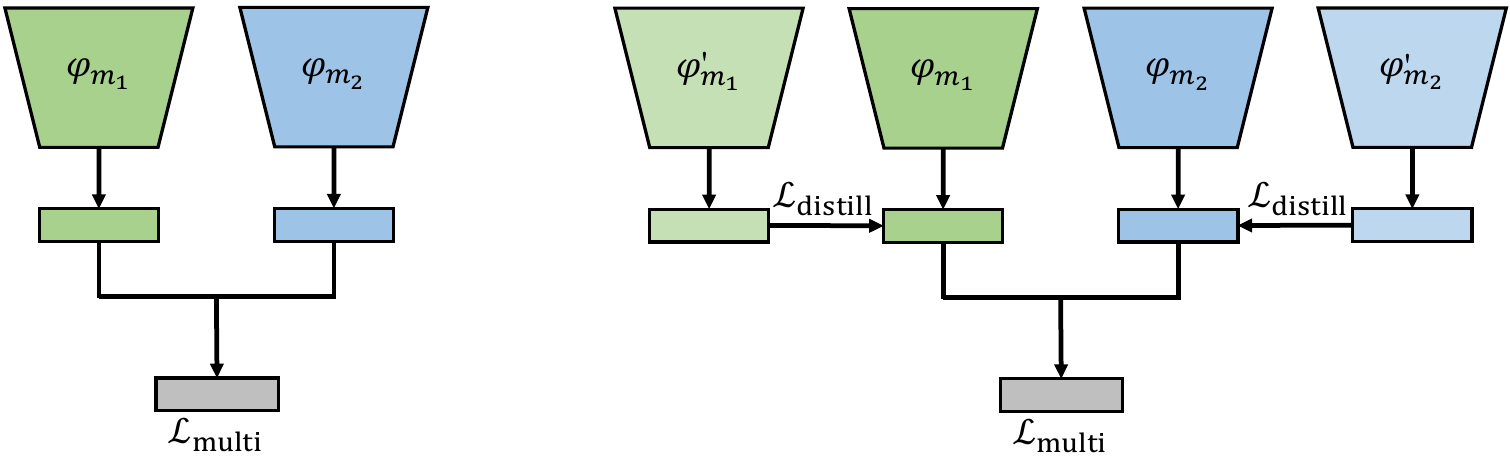}
    \caption{Model architecture of naive late fusion (left) and Uni-Modal Teacher (UMT) (right). $\varphi^{'}_{m_i}$ is the encoder which is supervised pre-trained on uni-modal data. $\varphi_{m_i}$ is a random initialed encoder without pre-training. $\mathcal{L}_{multi}$ is the loss between multi-modal predictions and labels. $\mathcal{L}_{distill}$ is the uni-modal distillation loss.}
    \label{fig:naive_fusion}
\end{figure}

\textbf{UMT's weights.} For VGG-Sound and Kinetics, we use $50$ (both audio feature distillation and RGB feature distillation) as the distillation loss's weight. 
We test different distillation weights on VGG-Sound and Kinetics-400. As shown in Table~\ref{tb:distill_weights}, on both datasets, UMT performs well with an distillation weight of 50.

\begin{table}[h]
    \centering
    \caption{Different distillation weights of UMT on VGG-Sound and Kinetics-400}
    \vspace{5pt}
    \begin{tabular}{ccccccc}
    \hline
    Dataset     & 0&  1 & 10& 20 & 50 & 100 \\
    \hline
    VGG-Sound & 49.46& 49.51 & 51.31 & 51.51
 & \textbf{53.46} & 53.11\\
    Kinetics-400 & 74.25 & 74.99 & 75.57& 76.11& \textbf{76.77}& 76.55\\
    \hline
    \end{tabular}
    \label{tb:distill_weights}
\end{table}

\subsection{Dropout in Multi-modal Training.}
\label{app_drop}

\begin{table}[]
    \centering
    \caption{Dropout in multi-modal training on VGG-Sound.}
	\vspace{5pt}
	\begin{tabular}{c| c }
		\toprule
		Method & Performance\\
		\midrule
	    Baseline & 49.46\\
	    Dropout& 49.83  \\
	    Modal-Drop& 51.37\\
	    \midrule
	    UMT&  \textbf{53.46}\\
		\bottomrule
	\end{tabular}
	\label{tab:dropout_vgg}
\end{table}

Here we consider the common regularizer, dropout~\citep{srivastava2014dropout}, and a variant of it, namely modality-wise dropout, which randomly drops (with probability $1/3$) the feature from one modality in each iteration. Modality dropout is akin to the ModDrop in \cite{neverova2015moddrop}. 
As Table~\ref{tab:dropout_vgg} shows, modality-wise dropout is significantly better than dropout, which implies that modality-wise laziness is serious and modality-wise dropout is also effective.

\subsection{Finetuning the uni-modal pre-trained encoders}
\label{app:finetune}

\begin{table}[ht]
   \centering
   \caption{The top-1 test accuracy of finetuning the uni-modal pre-trained encoders and linear evaluation on finetuned encoders on VGG-Sound. }
   \vspace{5pt}
    \begin{tabular}[htb]{cccc}
    \toprule
        \multirow{2}*{\textbf{Encoder LR}} & \multirow{2}*{\textbf{Top-1 Acc}} & \multicolumn{2}{c}{\textbf{Encoder Eval}}  \\
         &  & Audio & RGB \\
         \midrule
         1e-3& 50.98 & 43.98& 21.86 \\
         1e-4& 49.37& 44.71& 21.97\\
         1e-5& 50.45& 45.28& 23.13 \\
         1e-6& 50.86&45.29 & 23.27\\
         \midrule
         0&50.95& 45.15& 23.17\\
         \bottomrule
         
    \end{tabular}
    \label{tb:finetuning_encoder_app}
\end{table}

In this subsection, we use the uni-modal pre-trained encoders' parameters as the initialized weights in multi-modal training and randomly initialize a multi-modal linear classifier on the encoders. 
We set the classifier's learning rate as $1e-3$ and try different learning rates on the encoders.

As Table~\ref{tb:finetuning_encoder_app} shows, using the uni-modal  supervised pre-trained encoder's weights in multi-modal training and then fine-tuning the whole multi-modal model can bring some improvement compared to naive fusion~($49.46$) but is worse than UMT, which gets $53.46$ accuracy. 
When the learning rate of encoders is large, the encoders forget some abilities to extract uni-modal features.

In this paper, we focus on supervised learning and unsupervised pre-training\cite{ngiam2011multimodal} may be also helpful but beyond our scope.


\subsection{The role of cross-modal interactions on different datasets}
\label{app_class}

In this subsection, we conduct various experiments to further investigate the effect of cross-modal interactions and explore the benefits and harms that cross-modal interaction brings in different multi-modal tasks/datasets. We find that in different datasets, cross-modal interaction has a different effect on the performance.

\subsubsection{Averaging the uni-modal predictions \emph{vs} The linear classifier trained on uni-modal pre-trained encoders.}
\label{app_uniclf_from_mmclf}

In Sec~4.2, we train a multi-modal linear classifier on frozen uni-modal pre-trained encoders and compare this classifier with directly averaging uni-modal models' predictions.
As Table~\ref{tb:avg_linearclf} shows, this classifier does not consistently outperform simply averaging the uni-modal predictions on all datasets.
It shows better performance on VGG-Sound and Kinetics-400, but worse performance on UCF101 and ModelNet40.

To further explain this phenomenon, we check and disassemble this new trained multi-modal classifier on UCF101.
In the late-fusion multi-modal training, the features of different modalities are concatenated first and then the multi-modal classifier receives them and output predictions. 
Different modalities in the classifier do not share the parameters.  
So we split the new trained multi-modal linear classifier into uni-modal classifiers.
We use the uni-modal pre-trained encoders to extract features and then the uni-modal classifiers receive the corresponding features and output predictions.
Noting that OGM-GE~\citep{peng2022balanced} uses similar technique to check how well different modalities are trained. 
As Table~\ref{tb:uni_from_mm} shows, the uni-modal classifiers from new trained multi-modal classifiers are significantly worse than uni-modal models, implying that the multi-modal classifier trained on uni-modal pre-trained encoders suffers from serious Modality Laziness on UCF101, although it is just a simple linear layer, resulting in worse performance than directly averaging the uni-modal predictions.

\subsubsection{Class-level Evaluation on Different Multi-modal Datasets}
In this subsection, we compare naive late-fusion learning with averaging predictions of uni-modal models in \emph{class} level. It's obvious that there are more cross-modal interactions in naive fusion. 

Although naive fusion suffers from learning insufficient uni-modal features, we find in some classes in Kinetics and VGG-Sound, the accuracy of naive fusion model outperforms averaging the uni-modal models' predictions, and even outperforms the sum of the accuracy of the two uni-modal models in VGG-Sound and Kinetics-400, as shown in Table~\ref{tb:class} and \ref{tb:kinetics_class}.

However, 
We cannot find any class that naive fusion can exceed the sum of the accuracy of the uni-RGB model and uni-flow model in UCF101. 
We select classes in UCF101 by sorting the differences of accuracy between naive fusion and the best uni-modal model in class level and the top ten with the largest difference are selected. 
In these classes where naive fusion has advantages, averaging the predictions can outperform naive fusion in some classes~(ID:29, 67, 71), and this phenomenon is not found in VGG-Sound and Kinetics.
And as Tabla~\ref{tb:uni_from_mm} and Table~\ref{tb:ucf_class} show, both RGB and optical flow in UCF101 can get strong performance individually.
All the evidence shows that in UCF101, the uni-modal features are totally dominate and any joint training can lead to serious Modality Laziness. 

\begin{table}[t]
    \centering
    \caption{Top-1 test accuracy of different models on some classes of Kinetics. The accuracy of naive fusion model outperforms averaging the uni-modal models' predictions, and even outperforms the sum of the accuracy of the uni-audio model and uni-video model.}
    \vspace{5pt}
    \begin{tabular}{l|cccccccccc|c}
    \toprule
      Class ID   &53  &90 &184 &2 &368 & 158 & 113 & 263 & 287 & 4 &mean accuracy  \\
    \midrule   
    Uni-Audio&      0  & 0 &0 & 4   & 0  & 0  & 0 & 0 & 4 & 2& 1\\
    Uni-RGB& 42  & 50 &22& 28 & 39& 43 & 29& 82& 76 & 50& 46.1\\
    \midrule
    Avg Pred &42 &50 &22 &28 &39 &43 &29 &82 &78 &50 & 46.3\\
    Naive Fusion&   \textbf{56} & \textbf{62} &\textbf{32}& \textbf{40} & \textbf{45}& \textbf{49} & \textbf{35}& \textbf{84}& \textbf{86} & \textbf{58}& \textbf{54.7}\\
    \bottomrule
    \end{tabular}
    \label{tb:kinetics_class}
\end{table}

\begin{table}[t]
    \centering
    \caption{Top-1 test accuracy of different models on selected classes of UCF101. We select the top-10 classes according to the gap of accuracy between the multi-modal and uni-modal models. As we can see, uni-modal model's performance is high, meaning paired features in UCF101 are rare.}
    \vspace{5pt}
    \begin{tabular}{l|cccccccccc|c}
    \toprule
      Class ID   & 6  & 10 & 12 & 22 & 29 & 31 & 48 &57 & 67 & 71   & mean accuracy \\
    \midrule   
    Uni-RGB&      74  & 84 & 76 & 61 & 67 & 32 & 78 &21 & 75 & 57   & 62.5  \\ 
    Uni-Flow&     60  & 82 & 58 & 47 &61 & 41 & 64 &24 & 78 & 63   & 57.8 \\
    \midrule
    Avg Pred & 70 & 95 & 79& 64 & \textbf{86} & 46 & 89 & 36 & \textbf{98} & \textbf{83}   & 74.6 \\
    Naive Fusion& \textbf{86}  & \textbf{95} &\textbf{87} &\textbf{72}  & 83 & \textbf{59} &\textbf{92} & \textbf{42} & 88 & 73  & \textbf{77.7} \\
    \bottomrule
    \end{tabular}
    \label{tb:ucf_class}
\end{table}

\paragraph{The mapping betwenn class ID and class name in different datasets}

The correspondence between id and name of the selected class in VGG-Sound is:
164: People Sniggering,
303: Wood Thrush Calling,
33: Cat Meowing,
255: Sea Waves,
91: Footsteps On Snow,
4: Alligators Crocodiles Hissing,
152: People Gargling,
127: Mynah Bird Singing,
68: Door Slamming,
155: People Humming.

For Kinetics-400, we sort the classes alphabetically from smallest to largest according to the class name, and then we can get the mapping between class name and id.

In UCF101, the mapping can be found in classInd.txt, a given file of UCF101.

\subsection{Cross-modal interaction in Uni-Modal Teacher~(UMT)}

\begin{wraptable}{r}{6cm}
    \centering
	\renewcommand{\arraystretch}{1.1}
	\caption{Comparison of UMT with combining uni-modal models trained by distillation on VGG-Sound. }
	\vspace{5pt}
	\begin{tabular}{c| c  c| c }
		\toprule
		Method & RGB & Audio & R+A \\
		\midrule
	    Linear Clf & \textbf{25.99} & \textbf{46.00} & 52.98\\
	    UMT & 24.43 &45.89 & \textbf{53.46}\\
		\bottomrule
	\end{tabular}
	\label{tb:umt_cross}
\end{wraptable}
In order to verify whether the multimodal loss in UMT makes sense,
we train uni-modal models by knowledge distillation to get better performance than encoders trained by UMT and then combine them by introducing a new multi-modal classifier on these encoders. 
As Table~\ref{tb:umt_cross} shows, UMT works better in multi-modal performance, although the encoders of UMT in uni-modal evaluation are worse, showing that UMT indeed benefits from cross-modal interaction.

\subsection{Exploring UMT for Multi-modal Segmentation}
\textbf{NYU Depth V2} dataset \citep{silberman2012indoor} contains 1449 indoor RGB-Depth data totally and we use 40-class label setting. The number of training set and testing set is 795 and 654 respectively. All perprocessing operations are following~\citep{esanet2020}.

In contrast to the late fusion classification task, the RGB-Depth semantic segmentation belongs to middle fusion.
The main encoder receives RGB inputs, and the depth inputs are fed into the depth encoder. At each intermediate layer, the main encoder fuses its own intermediate outputs and the depth features obtained from the depth encoder, which makes it a mid-fusion task~\citep{esanet2020}. 
Since features generated by each layer matter, we distill multi-scale depth feature maps using the MSE loss. For feature maps from the RGB encoder, however, since they are generated by fusing RGB and depth modalities, we cannot distill RGB feature maps directly like depth feature maps. To mitigate this effect, we curate predictors, namely 2 layers CNNs, aiming to facilitate the fused feature maps to predict the RGB feature maps trained by the RGB modality before distillation. The full schematic diagram is presented in Figure~\ref{fig:mid}. As shown in Table~\ref{tb:rgbd-result}, UMT can also improve multi-modal segmentation whether the encoder is pre-trained on ImageNet or not.

\begin{figure}[t]
    \centering
    \includegraphics[width=0.8\textwidth]{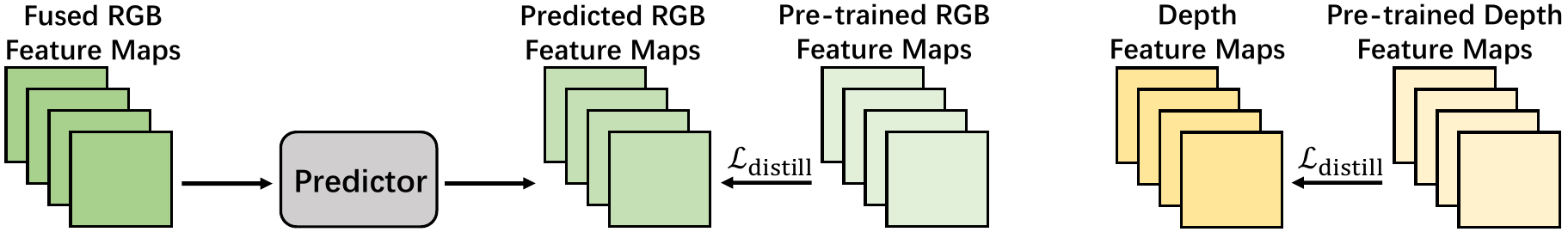}
    \caption{Distillation details of UMT for RGB (left) and depth (right) modalities in multi-modal semantic segmentation (based on ESANet).}
    \label{fig:mid}
\end{figure}

\begin{table}[t]
	\centering
	\renewcommand{\arraystretch}{1.1}
	\caption{Model performance comparison under UMT and ESANet on NYU-DepthV2 RGB-Depth semantic segmentation task.}
	\vspace{5pt}
	\begin{tabular}{c c c }
		\toprule
		\multirow{2}*{\textbf{Initialization}} & \multicolumn{2}{c}{\textbf{Training Setting}}   \\ \cline{2-3} & ESANet & UMT\\
		\midrule
		From Scratch & 38.59 & \textbf{40.45 (+1.86)}  \\
		ImageNet Pre-train  & 48.48  & \textbf{49.39 (+0.91)} \\
		\bottomrule
	\end{tabular}
	\label{tb:rgbd-result}
\end{table}

\subsection{Explanations on Paired Features}
\label{app_paired}
We revisit the definitions of uni-modal features and paired features: \textbf{uni-modal features}, which can be learned by uni-modal training; \textbf{paired features}, which can only be learned by cross-modal interaction in joint training.
Different datasets contain different proportions of these features.

In this subsection, we use synthetic datasets to explain the uni-modal features and paired features in multi-modal tasks. 

\paragraph{Understanding different types of features in multi-modal tasks by synthetic datasets.}
Three different multi-modal datasets are generated to help us understand the uni-modal features and paired features in multi-modal tasks. 
The process of data generation mainly refers to \cite{hessel2020does}.

\textbf{Firstly, we generate a dataset where each modality can extract the features to give correct predictions.} We name this dataset as \textbf{Dataset~$\alpha$}.
The data generation process is as follows:
\begin{enumerate}
    \item Sample random projection $P_1 \in \mathbb{R}^{d_1\times d}$ and $P_2 \in \mathbb{R}^{d_2\times d}$ from $U(-0.5,0.5)$. 
    \item Sample $z\in \mathbb{R}^d\sim \mathcal{N}(0,1)$. Normalize $z$ to unit length
    \item Sample $x\in \mathbb{R}^d\sim \mathcal{N}(0,1)$. Normalize $x$ to unit length
    \item if $|x\cdot z|\leq 0.1$, return to the Step 3.
    \item If $x\cdot z > 0.1$, then $y=1$; else $y=0$.
    \item Get the data point $(P_1x, P_2x, y)$.
    \item If the amount of data generated is less than $N$, return to the Step 3; else break
\end{enumerate}
The $P_1x, P_2x$ represents two modalities of the multi-modal dataset
and we set $d_1, d_2, d, N$ as $200, 100, 50, 5000$, respectively. And we randomly split 80\% of the generated data as train set, and the rest serves as a test set. 
In this dataset, uni-modal models can extract useful features to give correct predictions and multi-modal joint training is not necessary, as Table~\ref{tab:syn} shows~(\textbf{Dataset~$\alpha$}). 
\emph{We name the features, that uni-modal models can learns to give correct predictions in the given task, as uni-modal features.}

\begin{table}[t]
    \centering
    \caption{Test Accuracy of uni-modal models and multi-modal model on different synthetic datasets.
    \textbf{Synthetic Dataset~$\alpha$} mainly contains uni-modal features which can be learned in uni-modal training; 
    \textbf{Synthetic Dataset~$\beta$} mainly contains paired features which need joint training to learn; 
    \textbf{Synthetic Dataset~$\gamma$} contains uni-modal features and paired features.
    }
    \begin{tabular}{c|c| c| c}
    \toprule
        Dataset & \textbf{Synthetic Dataset~$\alpha$} & \textbf{Synthetic Datset~$\beta$} & \textbf{Synthetic Dataset~$\gamma$} \\
    \midrule
        Uni-modal~(Modality 1) &100 & 51.4 & 70.9\\
        Uni-modal~(Modality 2) &100 & 51.8 & 70.1\\
    \midrule
        Multi-modal &100 & 92& 94.4 \\
    \bottomrule
    \end{tabular}
    \label{tab:syn}
\end{table}

\begin{table}[t]
    \centering
    \caption{The confusion matrix of uni-modal model in \textbf{Dataset $\gamma$}. In the data labeled 0, each modality contains features that can give a correct prediction, while in the data labeled 1 or 2, we need both modalities together to make the right predictions.}
    
    \begin{tabular}{cc|ccc}
    \multicolumn{2}{c}{}
             &      \multicolumn{2}{c}{Predicted} \\
        &       &   0 &   1   & 2          \\ 
    \cline{2-5}
    \multirow{3}{*}{\rotatebox[origin=c]{90}{Actual}}
     & 0    & 100\%   & 0   & 0                 \\
     &  1  & 0    & 57\%    & 43\%               \\ 
     &  2 & 0    & 45.4\%    & 54.6\%               \\ 

    \cline{2-5}
    \end{tabular}
    \label{tab:confusion_matrix}
\end{table}

\textbf{Secondly, we generate another dataset where the model must rely on both the two modalities to make correct predictions.} We name this dataset as \textbf{Dataset~$\beta$}.
The data generation process is as follows:
\begin{enumerate}
    \item Sample random projection $P_1 \in \mathbb{R}^{d_1\times d}$ and $P_2 \in \mathbb{R}^{d_2\times d}$ from $U(-0.5,0.5)$. 
    \item Sample $x_1,x_2\in \mathbb{R}^d \sim \mathcal{N}(0,1)$. Normalize $x_1, x_2$ to unit length
    \item if $|x_1\cdot x_2|\leq 0.25$, return to the Step 2.
    \item If $x_1\cdot x_2 > 0.25$, then $y=1$; else $y=0$.
    \item Get the data point $(P_1x_1, P_2x_2, y)$.
    \item If the amount of data generated is less than $N$, return to the Step 2; else break
\end{enumerate}
This multi-modal dataset is different from the first dataset, because the labels in this dataset are highly dependent on the relationship between the two modalities.
As we can see in Table~\ref{tab:syn}~(\textbf{Dataset~$\beta$}), the uni-modal models can only gives about 50 percent accuracy, while the multi-modal models can give about 90 percent accuracy.
In binary classification tasks, 50 percent accuracy is no different from guessing.
\emph{In this dataset, because labels are heavily relied on the relationships of the two modalities, we must train both modalities simultaneously to extract the joint representations to learn the relationship of the two modalities, which are beyond uni-modal features. 
In order to better carry out theoretical analysis, we abstract these representations into paired features, which can only be learned from multi-modal joint training in multi-modal tasks.}

\textbf{Finally, we generate a dataset that contains both uni-modal features and paired features.} We name this dataset as \textbf{Dataset~$\gamma$}.
The data generation process is as follows:
\begin{enumerate}
    \item Sample random projection $P_1 \in \mathbb{R}^{d_1\times d}$ and $P_2 \in \mathbb{R}^{d_2\times d}$ from $U(-0.5,0.5)$. 
    \item Sample $z\in \mathbb{R}^d\sim \mathcal{N}(0,1)$. Normalize $z$ to unit length
    \item Sample $x\in \mathbb{R}^d\sim \mathcal{N}(0,1)$. Normalize $x$ to unit length
    \item if $x\cdot z \geq 0.1$, get the data point $(P_1x, P_2x, y=0)$.  else, return to the step 3 until collecting $2500$ data points.
    \item Sample $x_1,x_2\in \mathbb{R}^d \sim \mathcal{N}(0,1)$. Normalize $x_1, x_2$ to unit length. 
    If $x_1\cdot z > -0.1$ or $x_2\cdot z > -0.1$, resample.
    \item if $|x_1\cdot x_2|\leq 0.25$, return to the Step 5.
    \item If $x_1\cdot x_2 > 0.25$, then $y=2$; else $y=1$.
    \item Get the data point $(P_1x_1, P_2x_2, y)$.
    \item If the total amount of data generated is less than $7500$, return to the Step 5; else break
\end{enumerate}
In the data labeled 0, each modality contains features that can give a correct prediction, while in the data labeled 1 or 2, we need both modalities together to make the right predictions.
To further understand how uni-modal models give predictions in dataset that containing both uni-modal and paired features, we give the confusion matrix of the uni-modal model. As Table~\ref{tab:confusion_matrix} shows, the uni-modal model can give correct predictions for data labeled 0, while for data labeled 1 or 2, it fails and gives a random predictions. Because for data labeled 1 or 2, we need to learn the relationship of the two modalities to give the correct predictions.

In this subsection, we mainly discuss the synthetic multi-modal datasets and in Appendix~\ref{app_class}, we conduct various experiments on real-world multi-modal datasets to help us understand the uni-modal features, paired features and cross-modal interaction in multi-modal training better.

\paragraph{Training settings on synthetics datasets.}
We use a two layer MLP with ReLU as activation function. As for hidden layer, we use 200 dimensions for multi-modal training and 100 dimensions for uni-modal training. We use SGD as the optimizer and the learning rate is 0.2. In each iteration, we use the whole training set to compute the gradients. And we provide the code in supplement materials.

\subsection{Uni-Modal Performance in Kinetics-400}
\label{app_uni_model-kinetics}

Kinetics-400 is a dynamic dataset, because videos may be removed from YouTube. In this subsection, we report the uni-modal performance of ours and \cite{xiao2020audiovisual}'s on Kinetics-400. 
As Table~\ref{tab:kin_uni_modal} shows, we cannot reproduce their uni-modal performance and ours are lower than theirs. But we demonstrate that UMT outperforms AVSlowFast in Sec~4.3.1, which shows UMT's effectiveness.

\begin{table}[ht]
    \centering
    \caption{Uni-Modal Performance of ours and \citet{xiao2020audiovisual}'s on Kinetics-400}
    \vspace{5pt}
    \begin{tabular}{c|cc}
        \toprule
         &  ours & \citet{xiao2020audiovisual}\\
        \midrule
        Uni-Audio & 23.5 & \textbf{24.8}\\
        Uni-RGB~(SlowFast-50) & 74.9 &\textbf{75.6}\\
        Uni-RGB~(SlowFast-101) & 77.2 & \textbf{77.9}\\
        \bottomrule
    \end{tabular}
    \label{tab:kin_uni_modal}
\end{table}
\section{Proof}

\subsection{Proof of Theorem~\ref{thm:combineoverfitting}}
\label{append:proof}

\Combineoverfitting*

We prove the theorem, which shows that naive joint training indeed suffers from overfitting issues, meaning that it learns less features compared to \Vote.

\begin{proof}
We first introduce some additional notations used in the proof.
We define the features trained in $\modalOne$-uni-modal training as $f_1(\modalOne), \dots, f_{\bmone}(\modalOne)$, define the features trained in $\modalOne$-uni-modal training as $g_1(\modalTwo), \dots, g_{\bmtwo}(\modalTwo)$.
Therefore, there are in total $\bmone + \bmtwo$ features learned in \Vote, namely, $f_1(\modalOne), \dots, f_{\bmone}(\modalOne), g_1(\modalTwo), \dots, g_{\bmtwo}(\modalTwo)$.
Besides, We define the features trained in \Combine\ as $f_1(\modalOne), \dots, f_{\kmone}(\modalOne)$, $g_1(\modalTwo), \dots, g_{\kmtwo}(\modalTwo)$, $h_1(\modalOne, \modalTwo), \dots, h_{\kpa}(\modalOne, \modalTwo)$.
When the context is clear, we omit the dependency of $\modalOne, \modalTwo$ and denote them as $f_i, g_i, h_i$ for simplicity. 
When the context is clear, we abuse the notation $r$ to represent arbitrary $f$, $g$ or $h$.
The corresponding predicting probability of feature $r_i$ is denoted as $p(r_i)$.
To summary, there are $\bmone + \bmtwo$ features in \Vote, $\kmone + \kmtwo + \kpa$ features in \Combine.

We first prove statement (a.), which claims that the number of features learned in \Combine\ are provably less than any of the number of features learned in uni-modal training.
The proof depends on the following Lemma~\ref{lem1}.

\begin{lemma}
\label{lem1}
Assume there exists $T$ features $r_i, i=1,\dots, T$.
If we replace one of the $T$ features (without loss of generality, $r_T$) with a more powerful feature $r^\prime$, where $p(r^\prime) > p(r_T)$, then the predicting probability for each data point increases (where the probability is taken over the randomness of the training data).
\end{lemma}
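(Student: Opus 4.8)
The plan is to recast the problem in terms of signed ``votes'' and reduce it to a single inequality about the distribution of the other $T-1$ votes. For a data point and a learned feature $r$, set $V(r) = \bI(yr>0) - \bI(yr<0) \in \{+1,0,-1\}$, so that $V(r) = +1$ w.p.\ $p(r)$, $V(r)=-1$ w.p.\ $\epsilon(r) = p(r)/c$, and $V(r)=0$ otherwise, independently across features. Unwinding the evaluation procedure, the ensemble predicts $y$ correctly exactly when $S := \sum_{i=1}^{T} V(r_i) > 0$, and correctly with probability $\tfrac12$ when $S=0$; so the quantity to track is $\Pr[S>0] + \tfrac12\Pr[S=0]$, with the probability over the feature realizations at the data point. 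Write $S = S' + V(r_T)$, where $S' = \sum_{i<T} V(r_i)$ is independent of the last vote and is left unchanged by the replacement.

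Next I would observe that this probability is affine in the predicting probability of the last feature. Setting $\phi(v) := \Pr[S'+v>0] + \tfrac12\Pr[S'+v=0]$ and $P_k := \Pr[S'=k]$, a one-line computation gives $\phi(1)-\phi(0) = \tfrac12(P_0 + P_{-1})$ and $\phi(0)-\phi(-1) = \tfrac12(P_0+P_1)$, and the predicting probability with last feature $r$ equals
\[
\phi(0) + p(r)\Big[\big(\phi(1)-\phi(0)\big) - \tfrac1c\big(\phi(0)-\phi(-1)\big)\Big].
\]
Since $p(r') > p(r_T)$, the lemma follows once the bracketed slope is shown to be nonnegative, i.e.\ once we prove $\Pr[S'=1] \le (c-1)\Pr[S'=0] + c\,\Pr[S'=-1]$.

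The crux is the identity $\Pr[S'=1] = c\,\Pr[S'=-1]$ (and more generally $\Pr[S'=k] = c^k\Pr[S'=-k]$), after which the slope becomes $\tfrac12(1-\tfrac1c)P_0 \ge 0$ and we are done. To establish it I would condition on the (random) subset $A \subseteq \{1,\dots,T-1\}$ of features that are present at the data point. Here the structural assumption $\epsilon_i = p_i/c$ does the work: conditioned on $r_i$ being present, it votes $+1$ with probability $p_i/(p_i + p_i/c) = c/(c+1)$, \emph{the same constant for every $i$}; hence, conditioned on $A$, $S'$ is distributed as $2\,\mathrm{Bin}(|A|, \tfrac{c}{c+1}) - |A|$. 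Binomial coefficient symmetry then gives $\Pr[S'=1\mid A] = c\,\Pr[S'=-1\mid A]$ (both vanish unless $|A|$ is odd), and averaging over $A$ yields the identity. This also shows the increase is strict as soon as $P_0 = \Pr[S'=0] > 0$, e.g.\ whenever the remaining features can be simultaneously absent.

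I expect the conditioning identity to be the only real obstacle: the affine reduction is mechanical, but monotonicity is genuinely delicate because a stronger feature is \emph{also} more likely to vote incorrectly (both $p$ and $\epsilon = p/c$ grow), so ``stronger helps'' is not a priori clear. What saves it is precisely $\epsilon_i = p_i/c$, which makes the conditional correctness odds a universal constant and collapses $S'$ to a Binomial; without this the conditional $+1$-probability would depend on $i$ and the cancellation would break. A minor point to dispatch carefully is the tie-breaking rule, together with the degenerate boundary case where the remaining features are always present and odd in number, in which the inequality holds with equality and the predicting probability is merely unchanged rather than strictly larger.
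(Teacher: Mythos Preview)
Your proof is correct and hinges on the same key identity the paper isolates as a separate lemma, namely $\Pr[S'=1]=c\,\Pr[S'=-1]$. Where you diverge is in the packaging. The paper conditions first on the parity of the number of nonzero votes among $r_1,\dots,r_{T-1}$ and then on the value of $|S'|$, handling the sub-cases $|S'|\ge 2$, $|S'|=1$, and $S'=0$ separately and showing in each that the predicting probability either stays the same or increases; only the $S'=0$ case yields a strict gain, and the $|S'|=1$ case is where the identity is invoked. You instead compute the predicting probability once as an affine function of $p(r)$ and collapse everything to the single slope $\tfrac12(1-\tfrac1c)P_0\ge 0$, which removes the parity split altogether and makes transparent that the gain is exactly proportional to $\Pr[S'=0]$. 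Your derivation of the identity via conditioning on the set of present features and reducing to a $\mathrm{Bin}(|A|,\tfrac{c}{c+1})$ is also a slightly different route from the paper's sign-flip bijection between the events $\{S'=1\}$ and $\{S'=-1\}$, but both exploit the same structural assumption that $\epsilon_i/p_i=1/c$ is constant in $i$. Your argument is more compact; the paper's case analysis makes more visible which configurations of $S'$ contribute and which are neutral.
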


We next provide the proof of statements (a.): based on Lemma~\ref{lem1}.
We shall prove $\kmone + \kmtwo+ \kpa < \bmone$ without loss of generality.
Start from the features $f_1(\modalOne), \dots, f_{\kmone}(\modalOne)$ which are common features in both \Combine\ and Uni-modal training.
Next step, we add feature $f_{\kmone+1}$ in \Uni and $g_1$ in \Combine.
Obviously, $p(g_1) > p(f_{\kmone + 1})$ due to the training priority (or \Combine\ should learn $f_{\kmone + 1}$ instead of $g_1$).
Therefore, the predicting probability of \Combine\ is larger than \Uni.

Repeating the procedure by comparing $g_i$ with $f_{\kmone + i}$ and comparing $h_j$ with $f_{\kmone+\kmtwo+j}$, the predicting probability of \Combine\ is always larger than \Uni.
Note that $\bmone$ should be always larger than $\kmone+\kmtwo$, or the predicting probability of \Uni\ would be smaller than \Combine.
At the end of the comparison, the predicting probability of \Combine\ is still larger than \Uni.
This requires that \Uni\ should learn more features, which can be regarded as \Uni learns a features while \Combine\ learns an empty feature.
In conclusion, \Uni\ learns more features compared to \Combine, leading to $\bmone > \kmone+\kmtwo+\kpa$.

We next prove the statement (b.).
The proof of (b.) is based on (a.).
We next only consider modality $\modalOne$, the proof for modality~$\modalTwo$ is similar.
Note that the since the number of features learned in \Combine\ is less than $\bmone$, the number of features learned in $\modalOne$ must be less than $\bmone$ (Note that those features can be either \pairFeature\ or \singFeature, namely, $f_1, \dots, f_{\kmone}$ and $h_1, \dots, h_{\kpa}$). 
Therefore, \Combine\ learns less features compared to \Uni\ in modality~$\modalOne$.
On the other hand, the predicting probability of features learned in \Combine\ ($f_1, \dots, f_{\kmone}$ and $h_1, \dots, h_{\kpa}$, considering only modality $\modalOne$ for the \pairFeature) is less than that learned in \Uni\ ($f_1, \dots, f_{\bmone}$), because otherwise, \Uni\ will learn the features in $h$ instead of $f$.
In conclusion, when considering only modality $\modalOne$, the number of features learned in \Combine\ is less and its corresponding predicting probability is small.
Therefore, each modality in \Combine\ performs worse than \Uni.

We finally prove the statement (c.).
Recall that the loss is $-\sum_i u(r_i)$ where $u(r_i) = \bI( y r_i>0) - \bI(y r_i<0)$.
Note that $\bE (u(r_i)) = \frac{1}{2} p(r_i)$ and $|u(r_i)| \leq 1$.
We derive that:
\begin{equation*}
\label{eqn:losscompare}
    \begin{split}
        &\bP\left( -\sum_{i \in [\kmone]} u(f_i) - \sum_{i \in [\kmtwo]} u(g_i) - \sum_{i \in [\kpa]} u(h_i)  \leq -\sum_{i \in [\bmone]} u(f_i) - \sum_{i \in [\bmtwo]} u(g_i) \right) \\
        =& \bP\left( \sum_{\kmone < i \leq \bmone} u(f_i) + \sum_{\kmtwo < i \leq \bmtwo} u(g_i) - \sum_{i \in [\kpa]} u(h_i)  \leq 0 \right) \\
        =& \bP\left( \sum_{\kmone < i \leq \bmone} u(f_i) + \sum_{\kmtwo < i \leq \bmtwo} u(g_i) - \sum_{i \in [\kpa]} u(h_i) + \frac{1}{2} E \leq  \frac{1}{2} E \right), \\
    \end{split}
\end{equation*}
where $E  = - \bE (\sum_{\kmone < i \leq \bmone} u(f_i) + \sum_{\kmtwo < i \leq \bmtwo} u(g_i) - \sum_{i \in [\kpa]} u(h_i)) = \sum_{i \in [\kpa]} p(h_i) - \sum_{\kmone < i \leq \bmone} p(f_i) - \sum_{\kmtwo < i \leq \bmtwo} p(g_i)$.
Due to the training priority and the conclusion in (a.), $$\sum_{i \in [\bmone + 1, \bmone + \bmtwo]} p_{[i]} \leq \sum_{\kmone < i \leq \bmone} p(f_i) + \sum_{\kmtwo < i \leq \bmtwo} p(g_i).$$
Therefore, $E \leq \sum_{i \in [\kpa]} p(h_i) - \sum_{i \in [\bmone + 1, \bmone + \bmtwo]} p_{[i]} \leq \sqrt{8 (\kpa + \bmone - \kmone + \bmtwo - \kmtwo) \log(1/\delta)}$.
We next apply Hoeffding inequality on Equation~\ref{eqn:losscompare} and derive that
\begin{equation*}
    \begin{split}
          &\bP\left( -\sum_{i \in [\kmone]} u(f_i) - \sum_{i \in [\kmtwo]} u(g_i) - \sum_{i \in [\kpa]} u(h_i)  < -\sum_{i \in [\bmone]} u(f_i) - \sum_{i \in [\bmtwo]} u(g_i) \right) \\
          \leq & \exp(- E^2 / 8(\kpa + \bmone - \kmone + \bmtwo - \kmtwo)) \\
          \leq & \delta
    \end{split}
\end{equation*}
To conclude, \Combine\ outperform \Vote\ concerning the testing loss with probability at least $1-\delta$.

Compared to \Vote, denote the additional \pairFeature\ are indexed by $c$, and the additional \singFeature\ in \Vote\ are indexed by $v$.
We have that:
\begin{equation}
    \begin{split}
&\mathbb{P}(\sum_{i \in [c]} (I(f_i(x) > 0) - I(f_i(x) < 0)) - \sum_{j \in [v]} (I(f_j(x) > 0)-I(f_j(x) < 0)) > 0)\\
=& \mathbb{P}(\sum_{i \in [c]} I(f_i(x) > 0) - \sum_{j \in [v]} I(f_j(x) > 0) - \frac{1}{2}[\sum_{i \in [c]} p_i - \sum_{j \in [v]} p_j]> \frac{1}{2} [\sum_{j \in [v]} p_j - \sum_{i \in [c]} p_i] ) \\
\leq& \exp(- (\sum_{j \in [v]} p_j - \sum_{i \in [c]} p_i)^2 / 8|c+v|)
    \end{split}
\end{equation}
Therefore, if $\sum_{j \in [v]} p_j - \sum_{i \in [c]} p_i \geq \sqrt{8 (c+v) \log(1/\delta)}$, the probability is done.
Therefore, for a new data point, \Vote\ can outperforms \Combine\ with high probability.

\end{proof}

\begin{proof}[\textbf{Proof of Lemma~\ref{lem1}}]
We define $r_{[-T]}$ as the features $r_1, \dots, r_{T-1}$.
The proof is divided into two parts, depending on whether $\sum_{i \in [T-1]} \bI(r_i \neq 0)$ is even or odd.
We regard the term $\sum_{i \in [T-1]} \bI(r_i \neq 0)$ as the number of effective features in $r_{[-T]}$.
To simplify the discussion, we rescale $r$ such that $|y r| = 1$ (when $r\neq 0$) or $|y r| = 0 $ (when $r = 0$).

\emph{Case 1}: When the number of effective features in $r_{[-T]}$ is even.
(a. ) If $|\sum_{i \in [T-1]} y r_i| \geq 2$, adding $r_T$ or $r^\prime$ does not alter the predicting probability, namely
{\small
\begin{equation*}
    \begin{split}
        &\bP\left(y \left[r_T + \sum_{i \in [T-1]} y r_i\right] > 0 \ {\bigg|} \ \left|\sum_{i \in [T-1]} y r_i\right| \geq 2\right) + \frac{1}{2} \bP\left(y \left[r_T + \sum_{i \in [T-1]} y r_i\right] = 0 \ {\bigg|} \ \left|\sum_{i \in [T-1]} y r_i\right| \geq 2\right) \\
= &\bP\left(y \left[r^\prime + \sum_{i \in [T-1]} y r_i\right] > 0 \ {\bigg|} \ \left|\sum_{i \in [T-1]} y r_i\right| \geq 2\right) + \frac{1}{2} \bP\left(y \left[r^\prime + \sum_{i \in [T-1]} y r_i\right] = 0 \ {\bigg|} \ \left|\sum_{i \in [T-1]} y r_i\right| \geq 2\right) .
    \end{split}
\end{equation*} }

(b. ) When the number of effective features in $r_{[-T]}$ is even, $|\sum_{i \in [T-1]} y r_i| \neq 1$.

(c. ) When $|\sum_{i \in [T-1]} y r_i| = 0$, due to the assumption that $p(r^\prime) > p(r_T)$ and $\epsilon(r) = p(r)/c$, adding $r^\prime$ helps increase the predicting probability compared to $r_T$, namely
{\small\begin{equation*}
    \begin{split}
        &\bP\left(y \left[r_T + \sum_{i \in [T-1]} y r_i\right] > 0 \ {\bigg|} \ \left|\sum_{i \in [T-1]} y r_i\right| = 0\right) + \frac{1}{2} \bP\left(y \left[r_T + \sum_{i \in [T-1]} y r_i\right] = 0 \ {\bigg|} \ \left|\sum_{i \in [T-1]} y r_i\right| = 0\right) \\
> &\bP\left(y \left[r^\prime + \sum_{i \in [T-1]} y r_i\right] > 0 \ {\bigg|} \ \left|\sum_{i \in [T-1]} y r_i\right| = 0\right) + \frac{1}{2} \bP\left(y \left[r^\prime + \sum_{i \in [T-1]} y r_i\right] = 0 \ {\bigg|} \ \left|\sum_{i \in [T-1]} y r_i\right| = 0\right) .
    \end{split}
\end{equation*}}
The above inequality is derived based on the following equation:
{\small\begin{equation*}
    \begin{split}
          &\bP\left(y \left[r_T + \sum_{i \in [T-1]} y r_i\right] > 0 \ {\bigg|} \ \left|\sum_{i \in [T-1]} y r_i\right| = 0\right) + \frac{1}{2} \bP\left(y \left[r_T + \sum_{i \in [T-1]} y r_i\right] = 0 \ {\bigg|} \ \left|\sum_{i \in [T-1]} y r_i\right| = 0\right) \\
          =&\bP\left(y r_T > 0 \ {\bigg|} \ \left|\sum_{i \in [T-1]} y r_i\right| = 0\right) + \frac{1}{2} \bP\left(y r_T  = 0 \ {\bigg|} \ \left|\sum_{i \in [T-1]} y r_i\right| = 0\right) \\
          =& p(r_T) + \frac{1}{2} \left[1 - p(r_T) - \epsilon(r_T)\right] \\
          =& \frac{1}{2} \left[1 + (1-1/c) p(r_T)\right].
    \end{split}
\end{equation*}}
Since we assume $c>1$, the probability increases with probability $p(r_T)$.

Therefore, under the three conditions, adding $r^\prime$ increase the predicting probability more compared to $r_T$.
In summary, under case 1 (a-c), adding $r^\prime$ increase the predicting probability compared to $r_T$.

\emph{Case 2}: When the number of features in $r_{[-T]}$ is odd. The discussion in (b.) can be a little bit more complex compared to case 1.

(a. ) If $|\sum_{i \in [T-1]} y r_i| \geq 2$, similar to case 1, adding $r_T$ or $r^\prime$ does not alter the predicting probability, namely
{\small\begin{equation*}
    \begin{split}
        &\bP\left(y \left[r_T + \sum_{i \in [T-1]} y r_i\right] > 0 \ {\bigg|} \ \left|\sum_{i \in [T-1]} y r_i\right| \geq 2\right) + \frac{1}{2} \bP\left(y \left[r_T + \sum_{i \in [T-1]} y r_i\right] = 0 \ {\bigg|} \ \left|\sum_{i \in [T-1]} y r_i\right| \geq 2\right) \\
= &\bP\left(y \left[r^\prime + \sum_{i \in [T-1]} y r_i\right] > 0 \ {\bigg|} \ \left|\sum_{i \in [T-1]} y r_i\right| \geq 2\right) + \frac{1}{2} \bP\left(y \left[r^\prime + \sum_{i \in [T-1]} y r_i\right] = 0 \ {\bigg|} \ \left|\sum_{i \in [T-1]} y r_i\right| \geq 2\right) .
    \end{split}
\end{equation*}}

(b. ) If $|\sum_{i \in [T-1]} y r_i| = 1$:
(b.1 ) If $\sum_{i \in [T-1]} y r_i = -1$:
{\small\begin{equation*}
    \begin{split}
        &\bP\left(y \left[r_T + \sum_{i \in [T-1]} y r_i\right] > 0 \ {\bigg|} \ \sum_{i \in [T-1]} y r_i = -1\right) + \frac{1}{2} \bP\left(y \left[r_T + \sum_{i \in [T-1]} y r_i\right] = 0 \ {\bigg|} \ \sum_{i \in [T-1]} y r_i = -1 \right) \\
=& \bP\left(y r_T- 1 > 0 \ {\bigg|} \ \sum_{i \in [T-1]} y r_i = -1\right) + \frac{1}{2} \bP\left(y r_T- 1 = 0 \ {\bigg|} \ \sum_{i \in [T-1]} y r_i = -1 \right) \\
=&  \frac{1}{2} \bP\left(y r_T- 1 = 0 \ {\bigg|} \ \sum_{i \in [T-1]} y r_i = -1 \right)   \\
=& \frac{1}{2} p(r_T).
    \end{split}
\end{equation*}}

(b.2 ) If $\sum_{i \in [T-1]} y r_i = +1$:
{\small\begin{equation*}
    \begin{split}
        &\bP\left(y \left[r_T + \sum_{i \in [T-1]} y r_i\right] > 0 \ {\bigg|} \ \sum_{i \in [T-1]} y r_i = 1\right) + \frac{1}{2} \bP\left(y \left[r_T + \sum_{i \in [T-1]} y r_i\right] = 0 \ {\bigg|} \ \sum_{i \in [T-1]} y r_i = 1 \right) \\
=& \bP\left(y r_T+ 1 > 0 \ {\bigg|} \ \sum_{i \in [T-1]} y r_i = 1\right) + \frac{1}{2} \bP\left(y r_T+ 1 = 0 \ {\bigg|} \ \sum_{i \in [T-1]} y r_i = 1 \right) \\
=& (1 - \epsilon(r_T)) +  \frac{1}{2} \epsilon(r_T) \\
=& 1 - \frac{1}{2c} p(r_T).
    \end{split}
\end{equation*}}

Note that the probability of event (b.1) and the probability of event (b.2) satisfy the following equation by Lemma~\ref{lem:complementary}:
\begin{equation}
    \begin{split}
        \bP\left(\sum_{i \in [T-1]} y r_i = 1 \right) = c \bP\left(\sum_{i \in [T-1]} y r_i = -1 \right).
    \end{split}
\end{equation}

Therefore, the total probability under case (b) is 
\begin{equation*}
    \begin{split}
        &\frac{1}{2} p(r_T) \bP\left(\sum_{i \in [T-1]} y r_i = -1 \right) +  (1 - \frac{1}{2c} p(r_T)) \bP\left(\sum_{i \in [T-1]} y r_i = 1 \right) \\
        =& \bP\left(\sum_{i \in [T-1]} y r_i = 1 \right)
    \end{split}
\end{equation*}
which is independent of $p(r_T)$.
Therefore, adding $r_T$ or $r^\prime$ share the same predicting probability.

(c. ) When the number of effective features in $r_{[-T]}$ is odd, $|\sum_{i \in [T-1]} y r_i| \neq 0$.

In summary, under case 2 (a-c), adding $r^\prime$ do not decrease the predicting probability compared to $r_T$.

The following lemmas are used during the proof.
\begin{lemma}
\label{lem:complementary}
Consider $T-1$ features $r_1, \dots, r_{T-1}$, the following equation holds:
\begin{equation}
    \begin{split}
        \bP\left(\sum_{i \in [T-1]} y r_i = 1 \right) = c \bP\left(\sum_{i \in [T-1]} y r_i = -1 \right).
    \end{split}
\end{equation}
\end{lemma}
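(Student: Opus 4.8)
The plan is to deduce the identity from a single sign-flipping involution, after first normalising the features as in the proof of Lemma~\ref{lem1}. Recall that there each $r_i$ has been rescaled so that $y r_i \in \{-1,0,1\}$, and that by Definitions~\ref{def:self-standing}--\ref{def:paired} together with the standing assumption $\epsilon(r_i) = p(r_i)/c$, the variable $Z_i := y r_i$ satisfies
\begin{equation*}
\bP(Z_i = 1) = p(r_i), \qquad \bP(Z_i = -1) = \frac{p(r_i)}{c}, \qquad \bP(Z_i = 0) = 1 - p(r_i) - \frac{p(r_i)}{c},
\end{equation*}
with $Z_1,\dots,Z_{T-1}$ independent. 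Writing $S = \sum_{i \in [T-1]} Z_i$, the goal is $\bP(S = 1) = c\,\bP(S = -1)$.

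First I would prove the a priori stronger but more symmetric claim that $\bP(S = k) = c^{k}\,\bP(S = -k)$ for every integer $k$; the lemma is the case $k=1$. Let $\sigma$ be the involution on $\{-1,0,1\}^{T-1}$ that flips every coordinate, $\sigma(z_1,\dots,z_{T-1}) = (-z_1,\dots,-z_{T-1})$. For a fixed configuration $z$ with $a = \#\{i : z_i = 1\}$ and $b = \#\{i : z_i = -1\}$, the independence product formula gives
\begin{equation*}
\frac{\bP(Z = z)}{\bP(Z = \sigma(z))} = \frac{\prod_{i : z_i = 1} p(r_i)\ \prod_{i : z_i = -1} p(r_i)/c}{\prod_{i : z_i = 1} p(r_i)/c\ \prod_{i : z_i = -1} p(r_i)} = c^{a - b} = c^{\sum_i z_i},
\end{equation*}
since the "$z_i = 0$" factors are untouched by $\sigma$. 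Because $\sigma$ is a self-inverse bijection of $\{-1,0,1\}^{T-1}$ carrying $\{z : \sum_i z_i = -k\}$ onto $\{z : \sum_i z_i = k\}$, summing this identity over all $z$ with $\sum_i z_i = k$ yields $\bP(S = k) = c^{k}\sum_{w : \sum_i w_i = -k}\bP(Z = w) = c^{k}\,\bP(S = -k)$, and taking $k=1$ finishes the proof.

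An alternative I would keep in reserve is induction on the number of features: strengthen the hypothesis to $\bP(S = k) = c^k\bP(S = -k)$ for all $k$, peel off $Z_{T-1}$, expand $\bP(S = \pm 1)$ by conditioning on $Z_{T-1} \in \{-1,0,1\}$, and check that the three resulting terms rearrange via the hypothesis at levels $k = 0,1,2$. Either way there is essentially no obstacle; the only points needing care are that the normalisation $y r_i \in \{-1,0,1\}$ and the independence of the features are genuinely in force (both are already used in the proof of Lemma~\ref{lem1}), and that for a paired feature $h_j$ the quantity "$y r_j$" is to be read as $y\,h_j(\modalOne)h_j(\modalTwo)$, so that Definition~\ref{def:paired} indeed presents it in the same $\{-1,0,1\}$ form with the ratio $c$ between its $+1$ and $-1$ masses.
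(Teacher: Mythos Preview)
Your proof is correct and takes essentially the same approach as the paper: both arguments pair each configuration $z$ with its sign-flipped counterpart $\sigma(z)$ and exploit that the probability ratio is $c^{\sum_i z_i}$. Your write-up is more explicit and proves the slightly stronger statement $\bP(S=k)=c^{k}\bP(S=-k)$ for all $k$, but the core bijection is identical to the paper's ``complementary event'' pairing.
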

\begin{proof}
It can be proved to compare the events $A = \{\sum_{i \in [T-1]} y r_i = 1 \}$ and $B = \{ \sum_{i \in [T-1]} y r_i = -1 \}$.
Every event in $A$ has a complementary event in $B$, namely, 
\begin{equation*}
    \begin{split}
        y r_i = 1 \text{ in } B &\text{ if } y r_i = -1 \text{ in } A\\
        y r_i = -1 \text{ in } B &\text{ if } y r_i = 1 \text{ in } A\\
        y r_i = 0 \text{ in } B &\text{ if } y r_i = 0 \text{ in } A
    \end{split}
\end{equation*}
Comparing each event in $A$ with its complementary event in $B$ leads to the conclusion.
\end{proof}

Combining case 1 and case 2 together leads to the final conclusion.

\end{proof}

\subsection{Generalize Theorem~\ref{thm:combineoverfitting} to more modalities}
\label{generalize_multi_modal}

We next show that the results in Theorem~\ref{thm:combineoverfitting} can be generalized to the regime of more modals. 

Specifically, we assume a $T$-modal regimes, and denote the modals as $x^{m_i}, i \in [T]$.
In uni-modal pre-training approaches, let $b_{m_i}$ denote the number of returned features in modal $i$.
In multi-modal joint training, let $k_{m_i}$ denote the number of uni-modal features for modal $i$, and $k_{pa}$ denote the number of returned paired features. 
We derive the following Theorem~\ref{thm:modalcombineoverfitting} for the multi-modal regimes.

\begin{restatable}{theorem}{modalCombineoverfitting}
\label{thm:modalcombineoverfitting}

Based on the above notations, we provide three types of  laziness from three perspectives:

\begin{enumerate}
    \item[(a. )] \textbf{Quantity Laziness}: $\sum_i k_{m_i} + k_{pa} \leq \min_i \{b_{m_i}\}$.
    \item[(b. )] \textbf{Uni-modal Laziness}: Each modality in \Combine\ performs worse than uni-modal training.
    \item[(c. )] \textbf{Performance Laziness}:
    Consider a new testing point, then for every $\delta>0$, if the following inequality holds:
    $$\sum_{i \in [\kpa]} p(h_i) \leq \sum_{i \in [\min_i\{ b_{m_i} \}+1, \sum_i b_{m_i}]} p_{[i]} + \Delta(\delta),
    $$
    where $\Delta(\delta) = \sqrt{ 8 (\kpa + \sum_{j} [b_{m_j} - k_{m_j}]) \log(1/\delta)}$, then with probability\footnote{The probability is taken over the randomness of the testing point} at least $1-\delta$, \Vote\ outperform \Combine\ concerning the loss on the testing point with probability.
\end{enumerate}

\end{restatable}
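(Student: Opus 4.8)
The plan is to mirror the proof of Theorem~\ref{thm:combineoverfitting} almost verbatim, since Lemma~\ref{lem1} is already phrased for an arbitrary finite collection of features and is blind to which modality a feature belongs to; only the index bookkeeping changes. First I would fix notation: the features returned by \Vote\ are $\{f^{(j)}_s : j\in[T],\ 1\le s\le b_{m_j}\}$, and those returned by \Combine\ are $\{f^{(j)}_s : j\in[T],\ 1\le s\le k_{m_j}\}\cup\{h_1,\dots,h_{\kpa}\}$, with the superscript naming the modality and the subscript ordering features inside a block by decreasing predicting probability. Because the model learns features in decreasing order of predicting probability until the training error vanishes, the $k_{m_j}$ uni-modal features of modality $j$ retained by \Combine\ are precisely the $k_{m_j}$ most powerful ones of that modality, hence a prefix of the list learned by uni-$j$ training.

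For part~(a), let $\ell=\argmin_i b_{m_i}$ and compare \Combine\ against uni-$\ell$ training by the same stepwise replacement used in Theorem~\ref{thm:combineoverfitting}: start from the features common to both (the prefix $f^{(\ell)}_1,\dots,f^{(\ell)}_{k_{m_\ell}}$), then add, one at a time, a remaining feature of \Combine\ (some $f^{(j)}_s$ with $j\ne\ell$, or some $h_t$) opposite the next feature $f^{(\ell)}_{k_{m_\ell}+1},f^{(\ell)}_{k_{m_\ell}+2},\dots$ of uni-$\ell$ training. Training priority forces the \Combine-feature to be at least as powerful as the uni-$\ell$ feature it is matched against (otherwise \Combine\ would have learned that uni-$\ell$ feature instead), so Lemma~\ref{lem1} shows the predicting probability of \Combine\ dominates that of uni-$\ell$ training at every data point; since both must reach zero training error, uni-$\ell$ training must run at least as long, giving $\min_i b_{m_i}=b_{m_\ell}\ge\sum_i k_{m_i}+\kpa$. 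Part~(b) then follows by reading (a) one modality at a time: the features of \Combine\ that touch modality $j$ are $f^{(j)}_1,\dots,f^{(j)}_{k_{m_j}}$ together with the modality-$j$ parts of $h_1,\dots,h_{\kpa}$, a total of at most $k_{m_j}+\kpa\le b_{m_j}$ features by (a), and by training priority their predicting probabilities are dominated term-by-term by those of the top $b_{m_j}$ features of uni-$j$ training, so the modality-$j$ encoder produced by \Combine\ is weaker than the one produced by uni-$j$ training.

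For part~(c), I would cancel the common features in the two test-point loss expressions; what survives on the \Vote\ side is the leftover set $\{f^{(j)}_s : k_{m_j}<s\le b_{m_j}\}$, of cardinality $\sum_j (b_{m_j}-k_{m_j})$, and on the \Combine\ side the $\kpa$ paired features, so the loss gap is a sum of $n:=\kpa+\sum_j(b_{m_j}-k_{m_j})$ terms $u(r_i)=\bI(yr_i>0)-\bI(yr_i<0)$ that are independent over the test point, bounded in $[-1,1]$, with mean $\tfrac12 p(r_i)$. Its mean is governed by $E=\sum_{i\in[\kpa]}p(h_i)-\sum_j\sum_{k_{m_j}<s\le b_{m_j}}p(f^{(j)}_s)$; training priority together with~(a) gives $\sum_{i\in[\min_i b_{m_i}+1,\ \sum_i b_{m_i}]}p_{[i]}\le\sum_j\sum_{k_{m_j}<s\le b_{m_j}}p(f^{(j)}_s)$, so the hypothesis of~(c) controls $E$ by $\Delta(\delta)$, and Hoeffding's inequality applied to the $n$ terms shows that \Combine\ has smaller test loss than \Vote\ with probability at most $\exp(-E^2/(8n))\le\delta$ since $\Delta(\delta)^2=8n\log(1/\delta)$ — exactly the final step of the two-modal proof, now with $n$ in place of $\kpa+\bmone-\kmone+\bmtwo-\kmtwo$. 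The step I expect to be the main obstacle is part~(a): organizing the many-way interleaving of the uni-modal features from all $T$ modalities with the paired features so that every replacement step is a bona fide application of Lemma~\ref{lem1}, and, relatedly in part~(c), verifying that the cancellation really leaves exactly $n=\kpa+\sum_j(b_{m_j}-k_{m_j})$ features so that $\Delta(\delta)$ comes out as stated; neither requires an idea beyond the two-modal argument, only careful index accounting.
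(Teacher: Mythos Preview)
Your proposal is correct and matches the paper's approach: the paper does not spell out a separate proof of Theorem~\ref{thm:modalcombineoverfitting} but simply states that the two-modal argument for Theorem~\ref{thm:combineoverfitting} generalizes directly, and your write-up carries out exactly that generalization with the obvious index bookkeeping. The stepwise replacement via Lemma~\ref{lem1} for part~(a), the specialization to a single modality for part~(b), and the Hoeffding step after cancellation for part~(c) are all the intended arguments.
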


\subsection{Proof of Theorem~\ref{thm:umt}}
\label{append:proof_umt}


\umt*

\begin{proof}
The core of Theorem~\ref{thm:umt} is to clarify the training priority.
We revisit the notations of Theorem~\ref{thm:combineoverfitting} as follows without further clarification.
At the end of the training, \Vote\ learn $\bmone + \bmtwo$ useful features,  
namely, $f_1, \dots, f_{\bmone}, g_1, \dots, g_{\bmtwo}$.
And \Combine\ learn $\kmone + \kmtwo + \kpa $ features: $f_1, \dots, f_{\kmone}, g_1, \dots, g_{\kmtwo}$, $h_1, \dots, h_{\kpa}$.
We note that there are still many empty features $e_i$ in the model due to the initialization.

By distillation, the model learns the features according to the new priority.
Since the set $\cS$ is not empty, there exists \pairFeature s\ that is learned before the empty features.
By distillation, the model would learn all the useful features that appear in \Uni, as well as those features in set $\cS$.
Therefore, UMT outperforms \Vote when there exists useful paired features.


\end{proof}

\subsection{A concrete example to illustrate Theorem~\ref{thm:combineoverfitting}}
\label{append:example}

We next provide a concrete example to better illustrate the Modality Laziness issues.
For Example~\ref{eg:1}, we aim to show the Modality Laziness issues.
For Example~\ref{eg:2}, we aim to show the role of the pushing force.

\begin{example}
\label{eg:1}
Consider modality $\modalOne$ with features $f_1, f_2, f_3$ (corresponding prediction probability $p = 0.2, 0.1, 0.05$), and modality $\modalTwo$ with features $g_1, g_2, g_3$ (corresponding prediction probability $p = 0.15, 0.08, 0.02$).
We show the dataset in Table~\ref{tab:dataset} and aim to minimize the training loss to zero.

In \Uni, we learn features $f_1$, $f_2$ and $f_3$ on modality $\modalOne$ (similarly, $g_1$, $g_2$, and $g_3$ on modality $\modalTwo$).
Therefore, we learn features $f_1, f_2, f_3, g_1, g_2, g_3$ in \Vote.
In \Combine\ without \pairFeature, we can only learn three features $f_1, f_2, g_2$ due to the training priority $f_1>g_1>f_2>g_2>f_3>g_3$ (decreasing order in $p$).
This phenomenon is caused by modality laziness.

We next consider another \pairFeature\ $h$ with probability $p = 0.28$.
Under the case, \Combine\ only learn two features $h$ and $f_1$.
Therefore, when $h$ is not powerful enough, \Vote\ outperforms \Combine. 

\begin{table*}[t]
    \centering
    \caption{Dataset used in Example~\ref{eg:1}. $+$ means the feature is larger than zero and $-$ means the feature is less than zero. 
    We denote the predicting probability by $p$ and the rectified probability (due to pushing force) by $p^\prime$.
    }
    \begin{tabular}{c|c|c|c|c|c|c|c|c}
    \ & $f_1$&$f_2$&$f_3$&$g_1$&$g_2$&$g_3$&$h$&y\\
    \hline
    $p$& 0.20&0.10&0.05&0.15&0.08&0.02& 0.28 & / \\
    $p^\prime$ & 0.35($\uparrow$)&0.25($\uparrow$)&0.20($\uparrow$)&0.32($\uparrow$)&0.23($\uparrow$)&0.17($\uparrow$)& 0.28 & /  \\
       \hline
        data a &+&+&+&+&-&+&+&+1\\
        data b &0&+&0&+&+&-&+&+1\\
        data c &+&+&0&-&+&+&0&-1\\
        data d &+&-&+&+&+&0&+&-1\\
        \hline
    \end{tabular}
    
    \label{tab:dataset}
\end{table*}

\end{example}

\begin{example}
\label{eg:2}
We follow the notations and dataset in Example~\ref{eg:1}.
By applying the pushing force, assume that each probability of \singFeature\ boosts 0.15, which changes the training priority to $f_1 > g_1 > h > f_2 > g_2 > f_3 > g_3$ (decreasing order in $p^\prime$).
Therefore, \Combine\ (with pushing force) learns $f_1, f_2, h$.
As a comparison, \Combine\ (without pushing force) can only learn $f_1, h$. Therefore, pushing force helps learn more features.
We additionally remark that we only consider the training error in this example, and there might be other penalties in practice (\textit{e.g.}, distillation loss).
\end{example}

\end{document}